\newtheorem{theorem}{Theorem}[section]
\newcommand{\tabincell}[2]{\begin{tabular}{@{}#1@{}}#2\end{tabular}}
\newcommand\cparagraph[1]{\vspace{0.6mm}\noindent\textbf{#1.}}
\setlist{leftmargin=4mm}
\begin{document}
%
\title{Coordination of Drones at Scale:\\ 
Decentralized Energy-aware Swarm Intelligence \\ 
for Spatio-temporal Sensing}

\author{\IEEEauthorblockN{Chuhao Qin\IEEEauthorrefmark{1}, and 
Evangelos Pournaras\IEEEauthorrefmark{1}} \\

\IEEEauthorblockA{\IEEEauthorrefmark{1}School of Computing, University of Leeds, Leeds, UK}
\thanks{
Corresponding author: Evangelos Pournaras (email: e.pournaras@leeds.ac.uk).}}




\maketitle

\begin{abstract}
Smart City applications, such as traffic monitoring and disaster response, often use swarms of intelligent and cooperative drones to efficiently collect sensor data over different areas of interest and time spans. However, when the required sensing becomes spatio-temporally large and varying, a collective arrangement of sensing tasks to a large number of battery-restricted and distributed drones is challenging. To address this problem, this paper introduces a scalable and energy-aware model for planning and coordination of spatio-temporal sensing. The coordination model is built upon a decentralized multi-agent collective learning algorithm (EPOS) to ensure scalability, resilience, and flexibility that existing approaches lack of. Experimental results illustrate the outstanding performance of the proposed method compared to state-of-the-art methods. Analytical results contribute a deeper understanding of how coordinated mobility of drones influences sensing performance. This novel coordination solution is applied to traffic monitoring using real-world data to demonstrate a $46.45\%$ more accurate and $2.88\%$ more efficient detection of vehicles as the number of drones become a scarce resource. 
\end{abstract}

\begin{IEEEkeywords}
Coordination, drones, smart city, spatio-temporal sensing, swarm intelligence, unmanned aerial vehicles
\end{IEEEkeywords}

%
\IEEEpeerreviewmaketitle

\section{Introduction}
\IEEEPARstart{U}{nmanned} Aerial Vehicles (UAVs), commonly known as \emph{drones}, can form swarms for self-organization and collaboration in airborne mobile ad hoc networks. Nowadays, swarms of drones emerge in Smart Cites for spatio-temporal sensing~\cite{wu2016addsen,inoue2020satellite,butilua2022urban}. They are assigned tasks to execute such as collecting sensor data in areas of interest. For instance, swarms can capture images/videos of traffic-related information on public roadways; measure air temperature and humidity to support sustainable crop production; transmit real-time reports of natural disasters such as fire and car accidents; or accurately deliver goods in densely populated areas. In transportation system, drones can be used for an accurate monitoring of traffic to detect traffic congestion at early stage. This allows traffic operators to apply mitigation actions that decrease the carbon footprint of a sector with one of the highest carbon emissions worldwide~\cite{butilua2022urban,barmpounakis2020new}. 

To assist swarms of drones to complete sensing tasks efficiently, autonomous control of swarms and assignment of sensing tasks become a niche. Coordinated sensing involves the assignment of different sensing tasks to each drone while meeting the sensing requirements, drone capabilities and constraints~\cite{poudel2022task}. Earlier work is proposed to address the task assignment problem for efficient and large-scale spatio-temporal sensing by swarms of drones~\cite{fu2019secure,yanmaz2018drone,zhou2020uav}. Considering the heterogeneity and number of tasks, the problem is formulated as an NP-hard combinatorial optimization problem to find the optimal assignment of sensing tasks. Task assignment algorithms designed to solve spatio-temporal sensing problems range from market-based methods~\cite{alighanbari2005decentralized} to swarm intelligence~\cite{mac2018development}. 

However, mainstream approaches for UAV sensing task assignments do not achieve scalability, resilience and flexibility. This is because they rely on the centralized decision making while sacrificing a significant autonomy from drones~\cite{poudel2022task,bupe2015relief}. Therefore, the goal of this paper is to tackle instead a decentralized task assignment problem; drones collectively arrange and self-assign their sensing tasks, which is a highly complex research endeavor. On the one hand, coordinating the sensing tasks of drones is complex, i.e., large areas of interest, with varying sensing requirements and time-constrained missions. Certain areas with traffic jams or accidents may require for drones more fine-grained sensor measurements than areas with more regular traffic patterns. On the other hand, the inherent limitations in battery capacity influence spatio-temporal coverage. To tackle this complex task self-assignment problem, a decentralized and energy-aware coordination of drones at scale is introduced. Autonomous drones share information and allocate tasks cooperatively to meet complex sensing requirements while respecting battery constraints. Furthermore, the decentralized coordination method prevents single points of failure, it is more resilient and preserves the autonomy of drones to choose how they navigate and sense~\cite{nik2021using}.

A novel coordination model is designed using a multi-agent collective learning algorithm for multi-objective combinatorial optimization. It is selected because of its remarkable scalability (support of a large number of software agents/drones), efficiency (low communication and computational cost) and resilience, while preserving the privacy and autonomy of agents~\cite{pournaras2018decentralized,pournaras2020collective,Pournaras2020}. The self-assignment problem of sensing tasks by a swarm of coordinated UAVs is also validated within a novel prototyping testbed deployed in indoor sensing environments~\cite{qin20223,Fanitabasi2020}. Furthermore, a novel plan generation strategy with three policies is designed to produce effective navigation and sensing alternatives for flexible drones. A power consumption model~\cite{Stolaroff2018} is used in the proposed strategy to estimate the energy consumption of these alternatives. This can make the coordination of drones energy-aware as each drone determines its sensing tasks based upon its energy consumption restrictions, while improving the sensing performance. Finally, extensive evaluations are conducted using analytical methods, simulated and real-world data including comparisons to the state-of-the-art baseline methods.

The contributions of this paper are outlined as follows: 
\begin{enumerate}
	\item  A first study of the task self-assignment problem for spatio-temporal sensing by a swarm of interactive drones;
	\item  A decentralized coordination model by integrating multi-agent collective learning~\cite{pournaras2018decentralized} to improve scalability, resilience, and flexibility of spatio-temporal sensing;
	\item  A plan generation strategy with three policies based on a power consumption model~\cite{Stolaroff2018} to make the coordination model energy-aware and achieve a highly efficient navigation and sensing of drones.
    \item An open dataset~\cite{qin_data2022} containing all the plans of the studied scenario. They can be used as benchmarks to encourage further research on this problem. 
    \item Analytical results to understand more rigorously how coordinated mobility of drones influences sensing performance. 
    \item A comprehensive empirical understanding of how a large spectrum of factors such as the number of dispatched drones, the spatial granularity of sensing, the number of base stations and the required amount of collected data influence the sensing performance. 
	\item A traffic monitoring model using coordinated drones validated as accurate and efficient using real-world data. 
\end{enumerate}

The rest of this paper is organized as follows: Section~\ref{sec:related} positions this work in literature. Section~\ref{sec:model} introduces the designed coordination model of spatio-temporal sensing. Section~\ref{sec:method} illustrates our proposed plan generation method. Section~\ref{sec:evaluation} illustrates the experimental evaluation and Section~\ref{sec:conclusion} concludes this paper. The appendices contain the power consumption model, the analytical results with proofs of theorems and further information on the experimental settings. 

\section{RELATED WORK} \label{sec:related}
The assignment of sensing tasks within a swarm of intelligent and cooperative drones includes applications of traffic monitoring, disaster response, smart farming, last-mile delivery, etc.~\cite{poudel2022task}. The UAV task assignment problem is earlier defined as a Traveling Salesmen Problem for combinatorial optimization~\cite{gao2018multi,wu2021multi,chen2018multi}. An optimal task assignment for drones is found at specified places, subject to constraints including task emergency, time scheduling and flying costs. The digraph-based methods are introduced to formulate the problem of reaching optimal sensing efficiency in terms of coverage, inspection delay, events detection rate and the cost of flying trajectories. Nevertheless, existing models do not address the energy impact of task assignment since small-scale spatio-temporal scenarios do not drain the batteries of drones significantly. Although earlier work focuses on solving the energy-aware task assignment problem for large-scale and efficient sensing, it relies on a centralized system and does not consider the time of sensor data collection~\cite{motlagh2019energy,zhou2018mobile,bartolini2020multi}. This paper models the cost of plans calculated by the total power consumption of flying and hovering~\cite{Stolaroff2018}, which is used to optimize the decentralized coordination of swarms, while accounting for the battery constraint of each drone.

\begin{table}
	\centering
	\caption{Comparison to related work: (\Checkmark) indicates criteria covered, (\XSolid) indicates criteria not covered.}  
	\label{table:criteria}
	\resizebox{\linewidth}{!}{
	\begin{tabular}{lcccccccc}  
		\toprule  
		\textbf{Criteria \, Ref.:} &\tabincell{l}{\cite{gao2018multi}} &\tabincell{l}{\cite{wu2021multi}} &\tabincell{l}{\cite{chen2018multi}} &\tabincell{l}{\cite{bartolini2019task}} &\tabincell{l}{\cite{alighanbari2006robust}} &\tabincell{l} {\cite{chen2022consensus}} &\tabincell{l} {\cite{elloumi2018monitoring}} &This paper\\  
		\midrule  
		Sensing-efficiency 	&\Checkmark &\Checkmark &\Checkmark &\Checkmark &\Checkmark &\Checkmark &\Checkmark &\Checkmark \\   
		Energy-awareness		&\XSolid	&\XSolid	&\XSolid	&\Checkmark	&\XSolid	&\XSolid  &\XSolid	&\Checkmark \\
		Scalability 		&\XSolid	&\Checkmark	&\Checkmark	&\Checkmark	&\Checkmark	&\Checkmark &\XSolid	&\Checkmark \\  
		Resilience			&\XSolid	&\XSolid  	&\XSolid 	&\XSolid	&\XSolid &\XSolid &\XSolid	&\Checkmark	 \\  
		Flexibility		&\XSolid	&\XSolid	&\XSolid	&\XSolid	&\Checkmark	&\Checkmark &\XSolid	&\Checkmark \\  
  		Coordination		&\XSolid	&\XSolid	&\XSolid	&\XSolid	&\Checkmark	&\Checkmark &\XSolid	&\Checkmark \\
		\bottomrule
	\end{tabular}  
	}
\end{table} 

In the view of the optimization approaches shown in Table~\ref{table:criteria}, scholars introduce algorithms that require a centralized computation, such as particle swarm optimization~\cite{gao2018multi}, genetic programming~\cite{wu2021multi} and wolf pack search~\cite{chen2018multi}. Chen \textit{et al}.~\cite{chen2018multi} introduce a deadlock-free algorithm to prevent two or more drones from waiting for each other in a simultaneous task, and leverage the classical interior point method and wolf pack search algorithm to solve the uncertainty problem, including the uncertainty of the flying velocity, task effectiveness and communication. Wu \textit{et al}.~\cite{bartolini2019task} leverage a greedy approach by removing redundant target points from trajectories to maximize the weighted coverage while respecting energy constraints (with 5 drones to cover 225 target points). However, these centralized methods face the risk of single point of failure~\cite{omidshafiei2017decentralized}. If the central control station 
fails, the task assignment process cannot recover or mitigate such failure.

Distributed task assignment algorithms include the robust decentralized task assignment (RDTA)~\cite{alighanbari2006robust} and the consensus-based bundle algorithm  (CBBA)~\cite{chen2022consensus}. RDTA improves robustness and reduces communication cost using the decentralized planning, wherein the drones plan independently and communicate a set of candidate plans. CBBA combines the distributed structure of market-based mechanisms and situation awareness with a consensus strategy that converges and can avoid conflicts between the tasks executed by drones. Both algorithms coordinate autonomous drones, and can ensure scalability and flexibility of multi-UAV systems to adapt to complex sensing scenarios. Nevertheless, they do not address the constraints of energy consumption. Moreover, the resilience of the algorithms is not verified, in contrast to earlier work on EPOS shown to preserve its learning capacity in dynamic and unstable networked environments~\cite{Pournaras2020}. The proposed method in this paper overcomes scalability and resilience barriers, while leaving drones with a significant level of autonomy to make a flexible cooperative selection of tasks to execute. Furthermore, the prototyping problem of moving from simulation, to live deployment of decentralized socio-technical systems, and ultimately to a robust operation of a high technology readiness level (TRL) as well as online iterative traffic optimization is addressed in earlier work, with EPOS as a case study~\cite{fanitabasi2020self,gerostathopoulos2019trapped}.

Although drones have shown a potential to advance traffic monitoring in accuracy, safety and cost savings~\cite{outay2020applications}, there is a very limited related work that focus on the task assignment in this scenario. Elloumi \textit{et al}.~\cite{elloumi2018monitoring} propose a road traffic monitoring system that generates adaptive UAVs trajectories by extracting information about vehicles. Although this approach shows high performance in collecting traffic vehicle data, it does not address energy constraints and varying tasks over different time spans. 

\begin{table}  
	\caption{Notations.}  
	\centering
	\begin{tabularx}{\linewidth}{lXl}  
		\hline  
		Notation & Explanation \\  
		\hline    
		$ \mathcal{U}, \mathcal{N} $  & The set of dispatched drones; the set of grid cells \\  
		$ u, U $  & The index of a dispatched drone; the total number of dispatches \\  
		$ n, N $  & The index of a grid cell; the total number of grid cells \\ 
		$ T_n $  & The target value (or sensing requirements) at cell $n$ \\  
		$ S_{u,n} $ & The sensing values collected by drone $u$ at cell $n$  \\
		$ \tau_u $ & The total flying time of drone $u$ without hovering (seconds) \\
		$ P^\mathsf{f}_u, P^\mathsf{h}_u $ & Flying and hovering power consumption of drone $u$ (Watt) \\
  		$ E^\mathsf{f}_u, E^\mathsf{h}_u $ & Flying and hovering energy consumption of drone $u$ (Joule) \\
		$ C_u, e $ & The battery capacity of drone $u$ (Joule); energy utilization ratio \\
		$ J_u $ & The set of visited cells \\
        $ f $ & The frequency with which drones collect sensing data (unit of sensing value per second) \\
        $ p, P $  & The index of a plan; the total number of plans \\
        $ B $  & The index of a base station for departure/return \\
        $ K_B $ & The travel range or cells close to base station $B$\\
        $ m, M $  & The index of a time unit in a plan; the total number of time units in a plan \\ 
        $ h_{m,n} $ & The element of the plan to represent whether a drone occupies the cell $n$ at time unit $m$ \\
        $ V_m, V^*_m $  & The total number of vehicles at time unit $m$; the number of vehicles observed by drones at time unit $m$ \\ 
		\hline  
	\end{tabularx}  
	\label{table:notation}
\end{table} 

\section{SYSTEM MODEL} \label{sec:model}
This section introduces the coordination model to solve the decentralized task assignment problem for spatio-temporal sensing by a swarm of drones. Table~\ref{table:notation} illustrates the list of mathematical notations used in this paper.

\begin{figure*}[h]
	\centering
	\includegraphics[width=\linewidth]{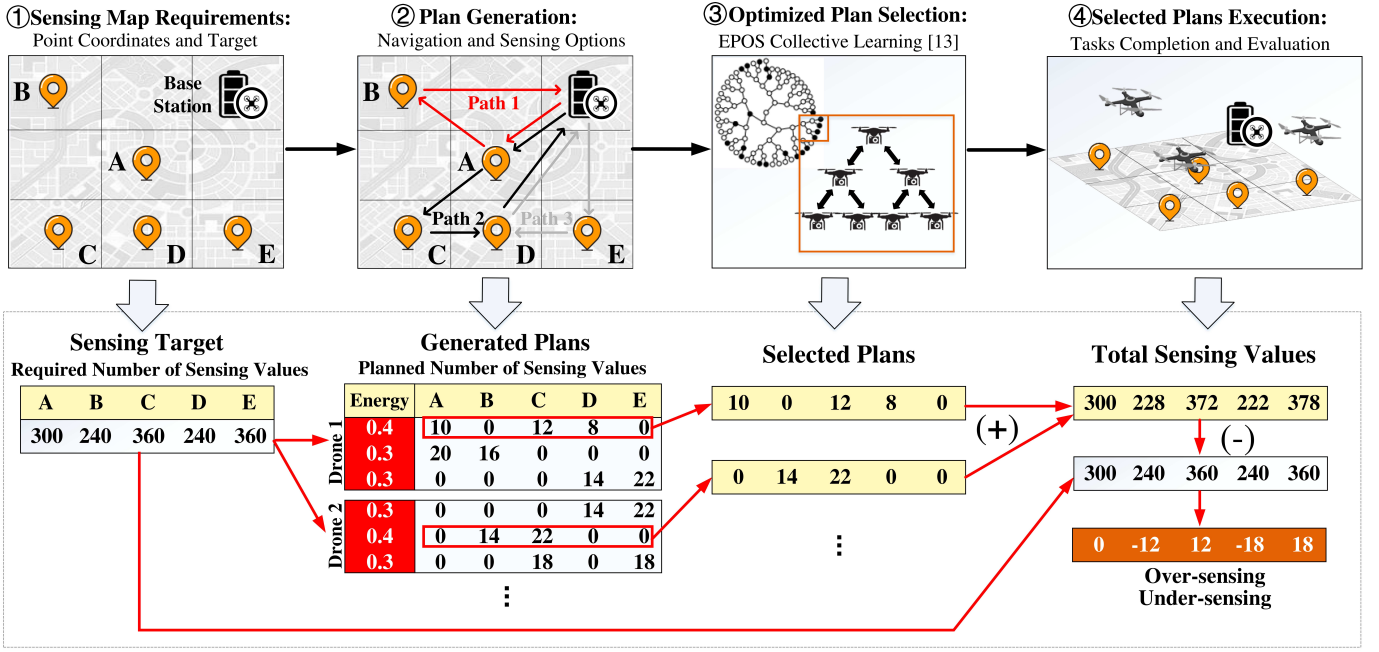}
	\caption{Overview of the decentralized energy-aware coordination model for spatio-temporal sensing by a swarm of drones.}
	\label{fig:overview}
\end{figure*}

\subsection{Scenario and architecture}
Consider a swarm of drones that perform sensing over a grid — a 2D map over the spatial illustrative model shown in Fig.~\ref{fig:overview}. A dispatch $u$ is defined as a sensing task between the departure and return of a drone. In this scenario, a finite number of base/charging stations, from where drones depart and return, are set with fixed coordinates in the area. Drone are equally distributed over these base stations, and have to return to the original base stations from which they depart. In addition, let $\mathcal{N}$ be the set of $N$ points of interest, each is regarded as a grid cell that covers an area in the map. The main goal of drones is to coordinate their visits to cells to collect the required data.

\cparagraph{Sensing map requirements}
In the context of a sensing task, each cell at a time period has specific sensing requirements that determine the hovering duration and data acquisition of drones. Such sensing requirements can be determined by a continuous kernel density estimation, for instance, monitoring cycling risk based on requirements calculated by past bike accident data and other information~\cite{castells2020cycling,qin20223}. A high risk level in a cell represents higher sensing requirements over this area (e.g., the accidents or crucial intersection of traffic flow). Thus, a higher number of sensing values is set at this cell, which requires drones to hover a longer time over the cell to measure accurately. For example, see Fig.~\ref{fig:overview}. It is assumed that the cell $A$ requires the total sensing values of $300$ at a time period. The sensing requirements of cells at a time period are set as the \textit{targets} encoded by a vector of size $N$, e.g., $\{300,240,360,240,360\}$.

\cparagraph{Plan generation}
To coordinate the assignment of sensing tasks, the drones use the decentralized multi-agent collective learning method of EPOS, the \textit{Economic Planning and Optimized Selections}~\cite{pournaras2018decentralized,pournaras2020collective}. Given the sensing map, each drone, controlled by a local software agent, autonomously generates a finite number of discrete navigation and sensing alternatives, which provide flexibility for the drones to choose in a coordinated way. Each alternative has sensing details as well as an estimated energy consumption; the alternatives are referred to as the \textit{possible plans} and each comes with a (normalized) \textit{cost} respectively. For instance, a plan encoding that a drone travels and hovers over the cells $A$, $C$ and $D$ to collect respectively $12$, $10$ and $10$ sensing values over a time period is encoded with: $\{12,0,10,10,0\}$ with $N=5$. The energy consumed by a drone that carries out its planned tasks is calculated via a power consumption model~\cite{Stolaroff2018} with input the specification of the drone (weight, propeller, and battery parameters). This model can estimate the cost of navigation and sensing plans, and emulates the outdoor environments~\cite{qin20223}.

\cparagraph{Plan selection and execution}
To make coordinated plans selection, the agents of the drones connect into a tree communication structure within which each interacts with its children and parent in a bottom-up and top-down fashion to improve plan selections iteratively~\cite{pournaras2018decentralized}. The objective of this method is to select the optimal plan for each agent such that all choices together add up to maximize the sensing quality: the overall sensing data collected by all agents matches well the required data (target, see Fig.~\ref{fig:overview}). In contrast, the mismatch is a result of \textit{over-sensing} and \textit{under-sensing}. For instance, in Fig.~\ref{fig:overview}, the sensing value of $372$ for which drones hover over the cell $C$ exceeds the requirements of $360$  (over-sensing), whereas drones hover over the cell $E$ to collect $222$ sensor values that is lower than the requirements of $240$ (under-sensing). Over-sensing causes excessive data that needs further processing, waste of energy consumption, high storage and privacy cost, while under-sensing fails to satisfy sensing requirements. As a consequence, the generation and selection of high-quality plans are required to eliminate over-sensing and under-sensing.

\subsection{Problem formulation}
The goal of the proposed system is to find the optimal plan for each drone $u \in \mathcal{U}$, i.e., determining which cells a drone visits and how many sensor values it collects over a time period. Apart from preventing over-sensing and under-sensing, the formulated problem aims to maximize the accomplishment of sensing tasks under the energy constraints for drones. Therefore, the problem is formulated as follows:

\begin{align}
    \begin{split}
        \mathop{\min}\limits_{S_{u,n}, u \in \mathcal{U}, n \in \mathcal{N}}
		&  \sum_{n \in \mathcal{N}} (T_n - \sum_{u \in \mathcal{U}} S_{u,n})^2,
    \end{split}
    \label{eq:model_optimal} \\
        \begin{split}
         \mathop{\min}\limits_{S_{u,n}, u \in \mathcal{U}, n \in \mathcal{N}}
        & (1 - \frac{ \sum_{u \in \mathcal{U}} \sum_{n \in \mathcal{N}} S_{u,n} }{\sum_{n \in \mathcal{N}} T_n}),
    \end{split}
    \label{eq:model_optimal2} \\
    s.t. \quad  
    & \sum_{u \in \mathcal{U}} \sum_{n \in \mathcal{N}} S_{u,n} \leq \sum_{n \in \mathcal{N}} T_n, \label{eq:sense} \\ 
    &P^\mathsf{f}_u \cdot \tau_u + P^\mathsf{h}_u \cdot \frac{\sum_{u \in \mathcal{U}} S_{u,n}}{f} \leq C_u. \label{eq:energy_consump}
\end{align}

\noindent where $S_{u,n}$ is the sensor values collected by the drone $u$ at cell $n$, $u \in \mathcal{U}$, $n \in \mathcal{N}$; $T_n$ is the target value at cell $n$; Objective (\ref{eq:model_optimal}) measures the over-sensing and under-sensing using a quadratic function~\cite{allen2016variance}. Objective (\ref{eq:model_optimal2}) measures the accomplishment of sensing tasks. Equation (\ref{eq:sense}) limits the total sensor value collected by drones to the required one at maximum. Constraint (\ref{eq:energy_consump}) models the energy constraint of the drone $u$, where $P^\mathsf{f}_u$ and $P^\mathsf{h}_u$ are the flying and hovering power consumption respectively~\cite{Stolaroff2018}; $\tau_u$ is the total flying time that drone $u$ travels without hovering; $f$ is the frequency with which drones collect sensor data as they hover over a cell.

EPOS optimizes sensing quality via a coordinated selection among alternative plans generated locally by the drones (i.e., navigation and sensing options). However, plans generation also results in a series of new problems, including \textit{how to select the cells}, \textit{how to determine collected sensing values}, \textit{how to calculate the energy cost of a plan}, and \textit{how to prevent more than two drones to occupy the same cell at the same time}.  Therefore, this paper proposes a novel plan generation strategy to solve these problems such that drones coordinate efficiently to achieve high-quality sensing.

\begin{algorithm} 
	\caption{The local plan generation strategy for each drone.} 
	\label{algorithnm1}
	\KwIn{Power consumption of drone $u$ for flying $P^\mathsf{f}_u$ and hovering $P^\mathsf{h}_u$, the battery capacity $C_u$, the targets $\boldsymbol{T} = (T_{1},...,T_{N})$ and their coordinates, the base station $B$ for departure/return and its travel range $K_B$, the total number of visited cells $|J_u|$, the number of generated plans $ P $.}
	\textbf{Initialization}: Initialize a set of plans $\widehat{\mathbb{P}} = \emptyset$\;
	\For{each plan index $p := 1,...,P$}
	{
		\textbf{Path calculation}: Find $J_u$ via the K-nearest search within the range $K_B$\;	
		Find the shortest path among visited cells and base station $B$ via the greedy algorithm\;
		\textbf{Energy calculation}: Calculate the flying energy $E^\mathsf{f}(J_u)$ based upon the path;
		Determine the energy utilization ratio $e$ based on $p$ via Eq.(\ref{eq:energy_ratio})\;
		Calculate the maximum hovering energy $E^\mathsf{h}(J_u)$ via Eq.(\ref{eq:hover_energy})\;
		Calculate the total sensing values $S(J_u)$ to collect via Eq.(\ref{eq:total_sense}) and the total targets $T(J_u)$ via Eq.(\ref{eq:total_target})\;
		\textbf{Sensing allocation}: Allocate the sensing values $S_{u,n}$ proportionally to the visited cells via Eq.(\ref{eq:sense_allocation})\;
		\textbf{Plan generation}: Calculate the cost of the plan $E(J_u)$ via Eq.(\ref{eq:total_energy})\;
		Generate the plan $\mathbb{P}$ of size $N \times M$, and add it to the set $\widehat{\mathbb{P}}$ of sensing plans with $E(J_u)$\;
	}
	\KwOut{Set of plans $\widehat{\mathbb{P}}$ for drone $u$.}
\end{algorithm} 

\section{PROPOSED METHOD}\label{sec:method}
Algorithm 1 outlines the following steps of the proposed plan generation strategy for a swarm of drones.

\cparagraph{Initialization}
Given a drone $u$, the inputs of the algorithm are listed: The flying and hovering power consumption are calculated based on the power consumption model~\cite{Stolaroff2018}. The parameters of drone $u$ $\{ m_b, m_e, d, r, v, F_d, \epsilon \}$ and environmental parameters such as air density and air speed are determined (see Table~\ref{table:environment} and Appendix A). Each drone $u$ also needs the information of the map including the coordinates and sensing requirements of cells, as well as the base stations of departure and return. 

Next, according to the objective functions of Eq.(\ref{eq:model_optimal}) and (\ref{eq:model_optimal2}), the total number of visited cells $|J_u|$ is determined empirically (see Section~\ref{sec:param}) using one of the three policies: (1) \emph{min sensing mismatch}, (2) \emph{min mission inefficiency}, and (3) \emph{balance}. Min sensing mismatch focuses on avoiding over-sensing and under-sensing, while min mission inefficiency minimizes the uncollected sensing data. The policy for balance is a trade off between the first two policies. The policy of min sensing mismatch has a larger number of visited cells, while min mission inefficiency has a lower one. The two theorems in Appendix B provide the theoretical foundations behind the design of these policies. The algorithm also initializes the set $\widehat{\mathbb{P}}$ for the plans of drone $u$. 

\cparagraph{Path calculation}
At the beginning of each round, the algorithm finds the set of visited cells indices $J_u$ via the K-nearest search. It selects the first cell randomly from $K_B$ (a range of cells close to base station $B$). This range is calculated based on the relative distance between base stations. Then, the algorithm searches the nearest cell (within $K_B$) to the previous selected one until finding other $|J_u|-1$ cells. After this, the algorithm finds the shortest possible path among the cells of $J_u$ and the base station $B$ via the greedy algorithm for traveling salesmen problem~\cite{kizilatecs2013nearest}, and calculates the traveling time $\tau_u$ excluding hovering. Note that the drone returns to $B$ at the end of period, and thus the path starts and ends at $B$. Taking an example in Fig.\ref{fig:overview}, the first plan of Drone $1$ has the set of visited cells $\{A,C,D\}$. 

\cparagraph{Energy calculation}
The algorithm determines the maximum energy consumption $E^{max}_u$ of the drone before calculating the collected sensor values and the corresponding energy cost. The algorithm uses the energy utilization ratio $e$ to compute the maximum energy constraint $C_u \cdot e$. The ratio can be expressed as:
\begin{equation}
	e = 1 - \frac{p}{\delta \cdot P},
	\label{eq:energy_ratio}
\end{equation} 
\noindent where $\delta$ is a constant to determine energy utilization. The ratio limits the energy consumption of drones, and gives them flexibility to select plans with varying cost, i.e. energy consumption. Next, the algorithm calculates the hovering energy consumption $E^\mathsf{h}(J_u)$ using the flying power consumption $P^\mathsf{f}_u$:
\begin{equation}
\begin{split}
    	E^\mathsf{h}(J_u) &= C_u \cdot e - E^\mathsf{f}(J_u),\\
                            &= C_u \cdot e - P^\mathsf{f}_u \cdot \tau(J_u).
\end{split}
	\label{eq:hover_energy}
\end{equation}
\noindent The total sensor values to collect among $J_u$ visited cells $S(J_u)$ is then calculated as follows:
\begin{equation}
	S(J_u) = \sum_{n \in \mathcal{N}} S_{u,n} = \frac{E^\mathsf{h}(J_u)}{P^\mathsf{h}_u} \cdot f,
	\label{eq:total_sense}
\end{equation}
\noindent where $P^\mathsf{h}_u$ is the power consumption for hovering. Both $P^\mathsf{f}_u$ and $P^\mathsf{h}_u$ are computed by the power consumption model~\cite{Stolaroff2018} (see Appendix A). 

\cparagraph{Sensing allocation}
To determine $S_{u,n}$ in each cell $n$, the algorithm allocates the collected sensor values among $J_u$ visited cells proportionally to the target; a higher number of sensor values is collected from the cell with a higher target value. The equation is shown as follows:
\begin{equation}
\begin{split}
        S_{u,n} = \left \{
        \begin{array}{ll}
           S(J_u) \cdot \frac{T_n}{T(J_u)},  &  n \in J_u\\
           0,  &    otherwise
        \end{array},
        \right.
\end{split}
    	\label{eq:sense_allocation}
\end{equation}
\begin{equation}
	where \quad T(J_u) = \sum_{n \in J_u} T_n.
	\label{eq:total_target}
\end{equation}
\noindent The proportional sensing allocation aims to improve the matching between the total sensor values collected and the required ones. We also prove its high performance by comparing it to the equal allocation (mean) $S_{u,n} = \frac{S(J_u)}{|J_u|}$ (see Section~\ref{sec:param}).  

\cparagraph{Plan generation}
With the energy consumption of hovering and flying, the algorithm computes the cost of the plan $E(J_u)$: 
\begin{equation}
    E(J_u) = C_u \cdot e.
    \label{eq:total_energy}
\end{equation}
\noindent Furthermore, the algorithm avoids more than two drones occupying the same cell at the same time. It generates a plan encoded by a matrix of size $N \times M$, where $M$ denotes the total number of time units. Each element in the matrix can be represented by $h^u_{m,n} \in \{0,1\}$, $m = 1,2,...,M$, $n = 1,2,...,N$, $\sum_u h^u_{m,n} \leq 1$ $\sum_{m} h^u_{m,n} \leq M$, $\sum_{n} h^u_{m,n} \leq N$, where $h^u_{m,n}=1$ denotes the drone occupies the cell $n$ at time unit $h$, whereas $h^u_{m,n}=0$ denotes the drone does not occupy the cell. The targets in EPOS are also set as a matrix of size $N \times M$, whose elements are set as 1 to prevent more than two drones occupying the same cell at same time. Finally, the algorithm adds the plan and its energy cost to the set $\widehat{\mathbb{P}}$.

\section{EXPERIMENTAL METHODOLOGY}\label{sec:evaluation}

This section introduces the experimental settings including the sensing map, the drones and the learning algorithm at first. The metrics and baselines used for the performance evaluation are also introduced. 

\subsection{Experimental settings}

\cparagraph{Sensing map}
A square area of size $1600 \times 1600$ meters split into a finite number of cells is studied. Each cell is defined as a rectangular square that can be captured by the cameras of drones (see Fig.~\ref{fig:overview}). The target values, or the sensing requirements of hovering time are distributed according to a Beta distribution. The base (charging) stations are uniformly distributed in the map. There are three types of scenarios for the experimental evaluation of this paper:
\begin{itemize}
	\item \emph{Basic synthetic scenario.} It has $4$ base stations, $64$ cells and the total target values of $20000$ to collect over $48$ time periods, which correspond to one day. Each period lasts $30$ min and is divided into $12$ time units, each of equal length. We dispatch $1000$ drones over all periods, during which approximate $20$ drones sense the area (camera recording) in parallel. The purpose to use this map is to compare the performance of different plan generation policies in the proposed method.
	\item \emph{Complex synthetic scenario.} It varies the parameter settings such as the number of dispatches, the number of base stations and cells as well as the total target values. The goal is to assess the scalability of the proposed method in different experimental conditions.
	\item \emph{Transportation scenario.} It originates from the central business district of Athens, where a swarm of drones use cameras to record traffic flows. The goal is to assess the accuracy and efficiency of the proposed method in the real-world traffic monitoring. More details are given in Part~\ref{sec:real} of this section.
\end{itemize}

\begin{table}  
	\caption{Notations for drones~\cite{Stolaroff2018}.}  
	\centering
	\begin{tabularx}{\linewidth}{lll}  
		\hline  
		Notation & Description & Value \\  
		\hline    
		$ m_{b} $ & mass of drone body & $1.07 kg$  \\
		$ m_{e} $ & mass of battery & $0.31 kg$  \\
		$ d $ & diameter of propellers & $0.35 m$  \\
		$ r $ & number of  propellers & $4$  \\
		$ v $ & ground speed & $6.94 m/s$  \\
		$ F_d $ & drag force & $4.1134 N$  \\
		$ \epsilon $ & power efficiency & $0.8$  \\
		$ C $ & battery capacity & $275 kJ$  \\
        $ f $ & sensing frequency & $60 \ seconds$  \\
		\hline  
	\end{tabularx}  
	\label{table:environment}
\end{table} 

\begin{table}  
	\caption{Parameters of the EPOS algorithm~\cite{pournaras2018decentralized}.}  
	\centering
	\begin{tabularx}{\linewidth}{lXl}  
		\hline  
		Parameters & Value \\  
		\hline  
		Number of agents/drones & $1000$  \\  
		Number of plans per agent & $64$  \\
		Network communication topology & balanced binary tree \\
		Number of repetitions & $40$  \\
		Number of iterations & $40$  \\
		Non-linear cost function & Min RSS (Unit-Length)  \\
        Energy utilization parameter & $\delta = 8$ \\
		Behavior of agent & $\beta = 0$  \\
		Number of tested maps & $200$  \\
		\hline  
	\end{tabularx}  
	\label{table:algorithm}
\end{table} 

\cparagraph{Drones}
We assume that drones are of the same type (DJI Phantom 4 Pro), equipped with the same type of battery (6000 mAh LiPo 2S) and camera (4K) to capture images/videos~\cite{Phantom_UAV}, and thus they have the same power consumption~\cite{Stolaroff2018} and battery capacity. To ensure the camera of a drone covers the whole area of a cell (see Fig.~\ref{fig:overview}), the minimum hovering height of drones is determined at which the field of view of the camera and the cell overlap. Based on the distance between any two cells $D$ (approximately $200$ meter in the basic synthetic scenario), the hovering height $H$ is computed using the pixels $\mathsf{PX}$, focal length derived from the camera calibration $c_k$ and ground sampling distance $\mathsf{GSD}$, with the formula: $H=\mathsf{GSD} \cdot c_k / \mathsf{PX}$~\cite{wierzbicki2018multi}. Thus, in this paper, each drone is equipped with a $4K$ camera, sensing from a minimum height of $82.4$ meter based on the pixels and field of view that a $4K$ camera has. The drone parameters and their description are summarized in Table~\ref{table:environment}. 

\cparagraph{Learning algorithm}
We generate plans for each of the $1000$ agents. All generated plans are made openly available to encourage further research on coordinated spatio-temporal sensing of drones~\cite{qin_data2022}. Each agent in EPOS is mapped to a drone. During the coordinated plan selection via EPOS\footnote{EPOS is open-source and available at: https://github.com/epournaras/EPOS.}, agents self-organize into a balanced binary tree as a way of structuring their learning interactions~\cite{pournaras2018decentralized}. The shared goal of the agents is to minimize the residual sum of squares (RSS) between the total sensor values collected and the total required target values, both in unit-length scaled. The RSS is a proxy to optimize the two objectives defined in Eq.(\ref{eq:model_optimal}) and (\ref{eq:model_optimal2}), and can be formulated as: $\log_{10} (\sum_{n \in \mathcal{N}} (\sum_{u \in \mathcal{U}} S_{u,n} - T_n)^2)$. The algorithm repeats $40$ times by changing the random position of the agents on the tree\footnote{More information about the influence of the tree topology and agents' positioning on the tree is illustrated in earlier work~\cite{Nikolic2019,Pournaras2020}.}. At each repetition, the agents perform $40$ bottom-up and top-down learning iterations during which RSS converges to the minimum optimized value. For the validation of the proposed algorithm, a number of $1000$ basic synthetic scenarios are generated, each with a new distribution of cells and target values. The algorithm requires at least $200$ sensing maps as input to output stable results (see Fig.~\ref{fig:statistics} in Appendix C). Therefore, a number of $200$ random tested maps are set for the following experiments. The computational cost of EPOS is $O(P \cdot I)$ for each agent and $O(P \cdot I \cdot \log U)$ for the system, where $I$ is the total number of iterations. The communication cost is $O(I)$ for each agent and $O(I \cdot \log U)$ for the system. Earlier comparisons of this complexity with state-of-the-art approaches demonstrate the cost-effectiveness of EPOS~\cite{pournaras2018decentralized}. Table~\ref{table:algorithm} summarizes the optimization parameters of EPOS.

\subsection{Metrics and baselines}
To evaluate the sensing quality, i.e., the accomplishment of sensing targets, we introduce two performance metrics:
\begin{itemize}
    \item \emph{Sensing mismatch.} It denotes the approximation error\footnote{Error and correlation metrics such as the root mean squared error, cross-correlation or residuals of summed squares can estimate this matching, which is shown earlier to be an NP-hard combinatorial optimization problem in this context~\cite{pournaras2018decentralized,pournaras2020collective}.} between the total sensing values collected and the target values. A low sensing mismatch prevents the cases of over-sensing and under-sensing~\cite{qin20223}. It is formulated as $log_{10}$ of Eq.(\ref{eq:model_optimal}).
    \item \emph{Mission inefficiency.} It denotes the ratio of sensing values in all cells that are not collected by the drones during their mission over the total target values in all cells. It is formulated as Eq.(\ref{eq:model_optimal2}).
\end{itemize}

In simple words, the sensing mismatch measures the data sampling quality, while the mission inefficiency measures the completeness of the required collected data. 

A fair comparison of the proposed method with related work is not straightforward as there is a very limited number of relevant decentralized algorithms. These algorithms~\cite{alighanbari2006robust,chen2022consensus} cannot be directly applied to this large-scale task assignment problem while respecting the energy constraints of drones. For this reason, we use as baselines for comparison three state-of-the-art centralized sensing methods capable of performing multi-drone task optimization: (1) \emph{Greedy-sensing}~\cite{bartolini2019task}, (2) \emph{Round-robin}~\cite{alwateer2019two} and (3) \emph{Min-energy}~\cite{pournaras2018decentralized}.
\begin{itemize}
	\item \emph{Greedy-sensing.} It requires a drone to complete the required sensing tasks of the cells one by one without violating the battery constraint. This method reduces the number of visited cells and traveling distance compared to the proposed method; drones spend more energy on sensing than traveling. We implement the method on a centralized coordinator that has a global view of the remaining sensor values required such that over-sensing and under-sensing are prevented. Table~\ref{table:greedy_sensing} illustrates the higher performance of the method with a global view vs. a version with a local view, i.e., no knowledge of the remaining sensor values required.
	\item \emph{Round-robin.} It comes in sharp contrast to \emph{Greedy-sensing}. Drones visit the same number of cells and spend more energy on traveling than \emph{Greedy-sensing}. According to the results shown in Table~\ref{table:round_robin}, the number of visited cells is divided equally into 8 for each drone as it has the minimum sensing mismatch.
	\item \emph{Min-energy.} It minimizes the total energy consumption and does not sacrifice energy for improving sensing quality. This method is implemented by changing the behavior of EPOS agents to $\beta=1$ such that the agents select the plans with the lowest energy consumption cost. No coordination is performed in this case.
\end{itemize}

\begin{table}[htbp]
	\centering
	\caption{Performance of two implementations in \emph{Greedy-sensing}.}  
	\label{table:greedy_sensing}
	\resizebox{6.5cm}{!}{
		\begin{tabular}{lll}  
			\toprule  
			Implementation &Global view&Local view\\  
			\midrule  
			Sensing Mismatch &2.79&3.21 \\    
			Mission Inefficiency (\%) &19.80&26.11 \\  
			\bottomrule
		\end{tabular}  
	}
\end{table} 

\begin{table}[htbp]
	\centering
	\caption{Results for the number of visited cells in \emph{Round-robin}.}  
	\label{table:round_robin}
	\resizebox{\linewidth}{!}{
		\begin{tabular}{lllllll}  
		\toprule  
		Number of Visited cells: &5&6&7&8&9&10\\  
		\midrule  
		Sensing Mismatch &2.24&2.40&2.34&1.86&3.01&3.79\\    
		Mission Inefficiency (\%) &25.93&40.42&62.59&75.51&85.69&96.68\\  
		\bottomrule
		\end{tabular}  
	}
\end{table} 

\section{PERFORMANCE EVALUATION}\label{sec:evaluation}
This section assesses the different system parameters and illustrates the results of the three evaluated scenarios: basic synthetic, complex synthetic and transportation scenario. Analytical results about how coordinated mobility influences the sensing performance are illustrated in Appendix~B. 

\subsection{Effect of different parameters}\label{sec:param}

The purpose of the evaluation in this section is to understand how different parameters influence the system performance. 
We use the results of this section to make an empirical choice of these parameters for the rest of the evaluation scenarios. The calculations presented here can be automated for any scenario within a hyper-parameter optimization, which is though not the focus of this paper. 

We use \emph{EPOS-balance} and the basic synthetic scenario to test the mobility range, the number of generated plans, energy utilization, the proportional sensing allocation and the behavior of agents in EPOS.

\begin{figure*}[!htb]
	\centering
	\subfigure[Mobility range.]{
		\includegraphics[height=3.5cm,width=8cm]{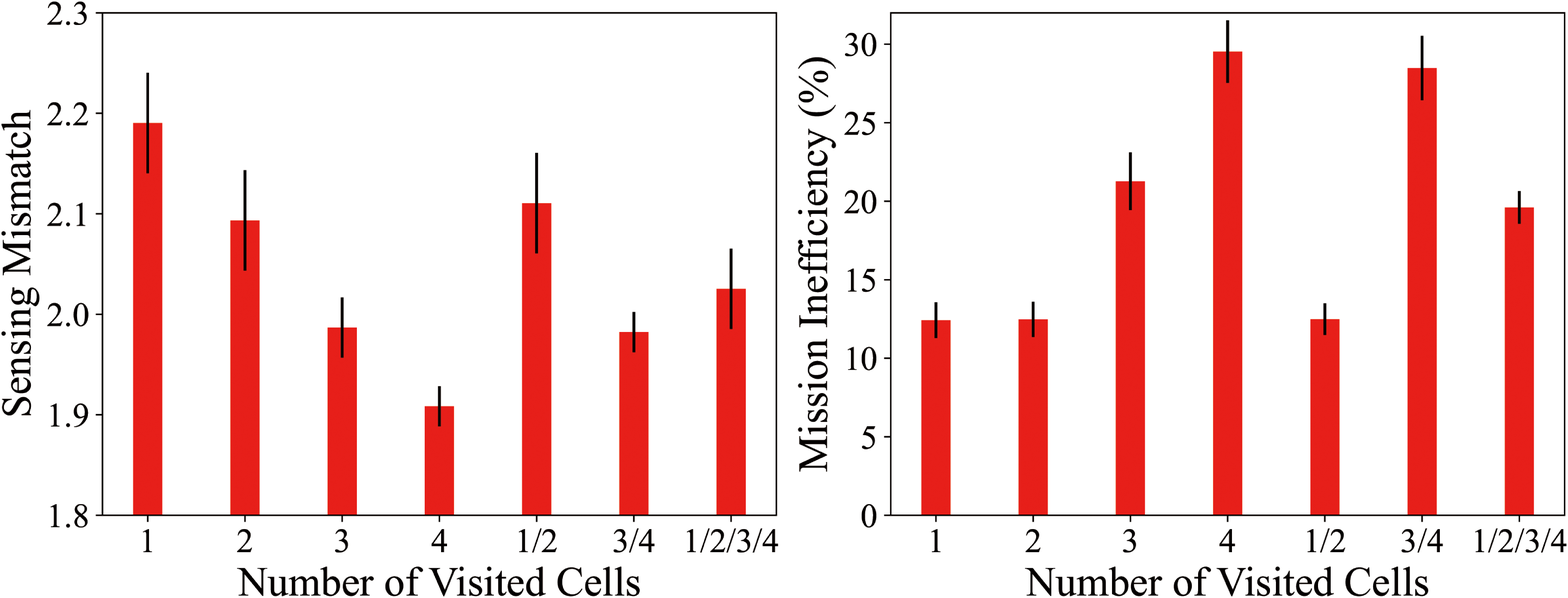}
		\label{fig:visited_cells}
	}
	\subfigure[Number of plans.]{
		\includegraphics[height=3.5cm,width=4.2cm]{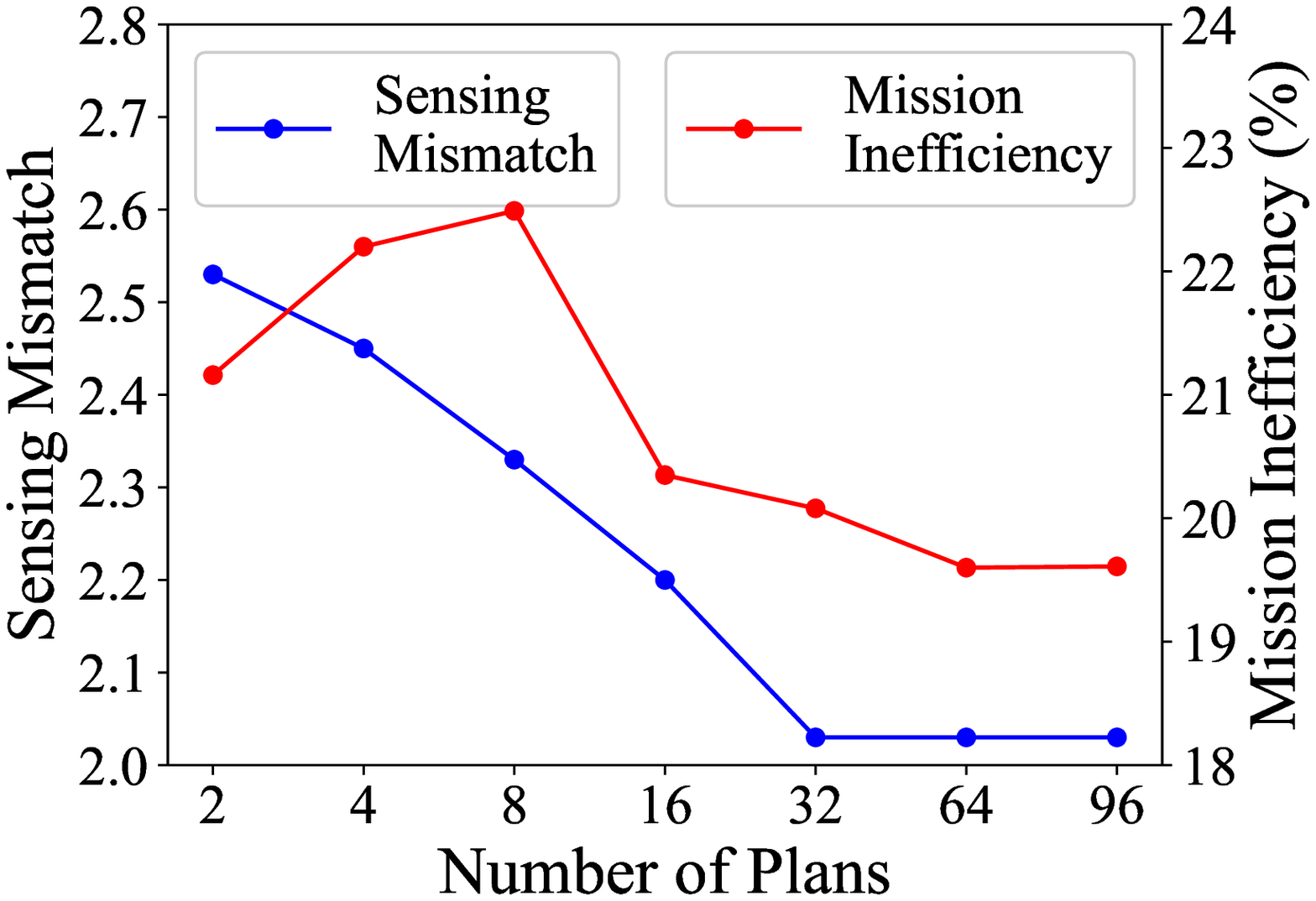}
		\label{fig:plans_num}
	}
     \hspace{3cm}
	\subfigure[Sensing allocation schemes.]{
		\includegraphics[height=3.5cm,width=4.3cm]{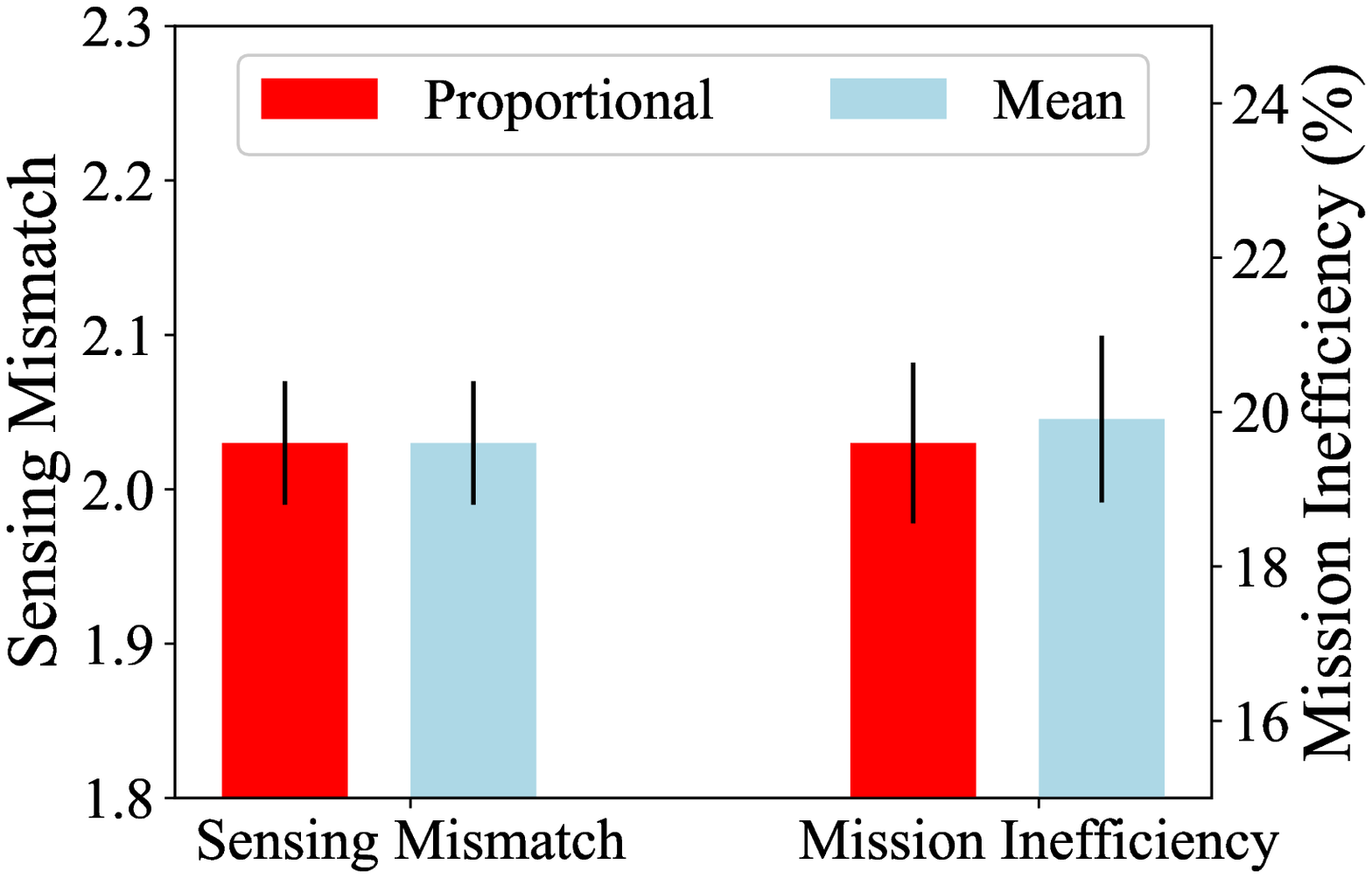}
		\label{fig:no_allocation}
	}
     \subfigure[Energy Utilization.]{
        \includegraphics[height=3.5cm,width=4.3cm]{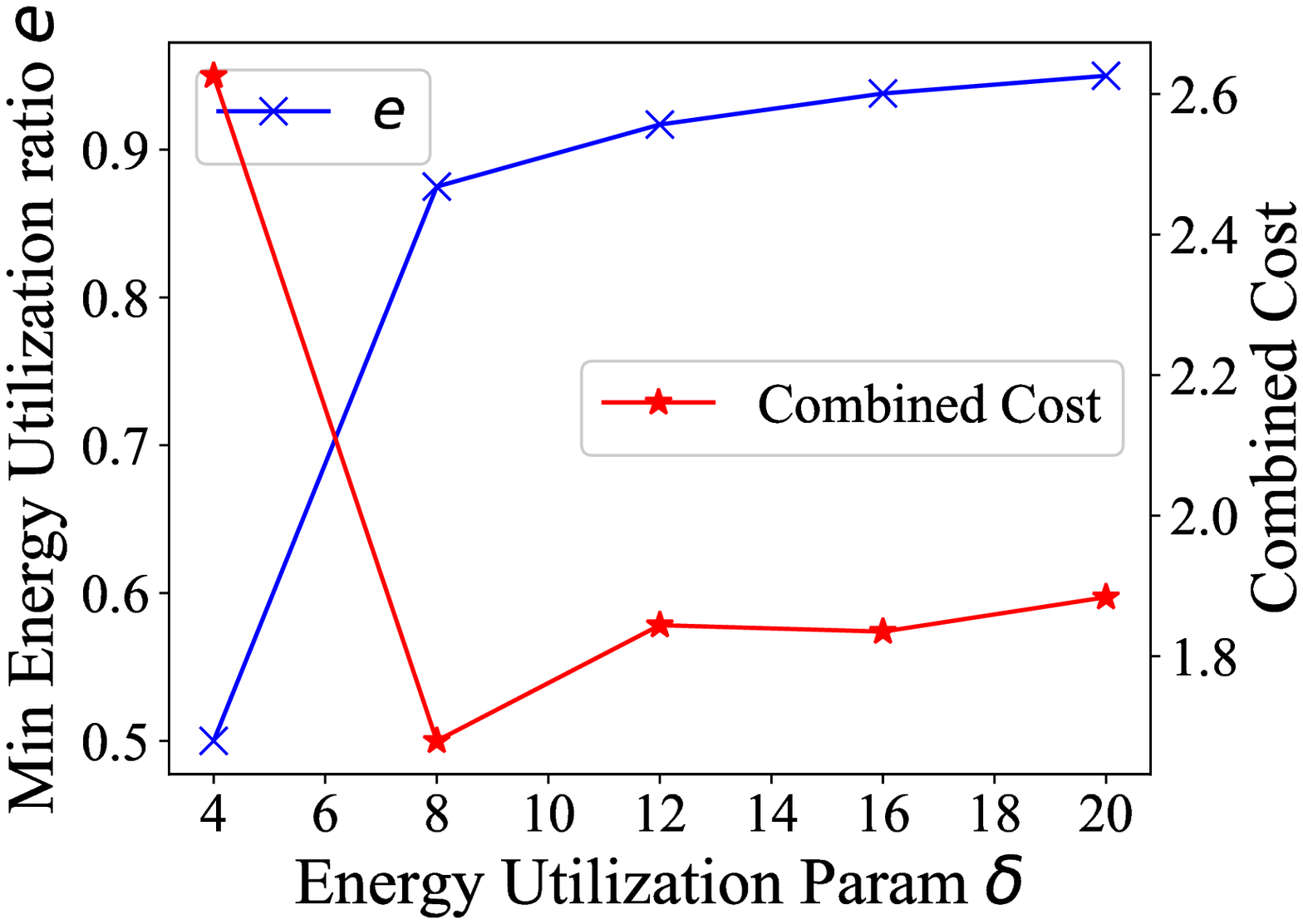}
        \label{fig:utilization}
    }
        \subfigure[Agents' behavior ($\beta$).]{
		\includegraphics[height=3.5cm,width=4.4cm]{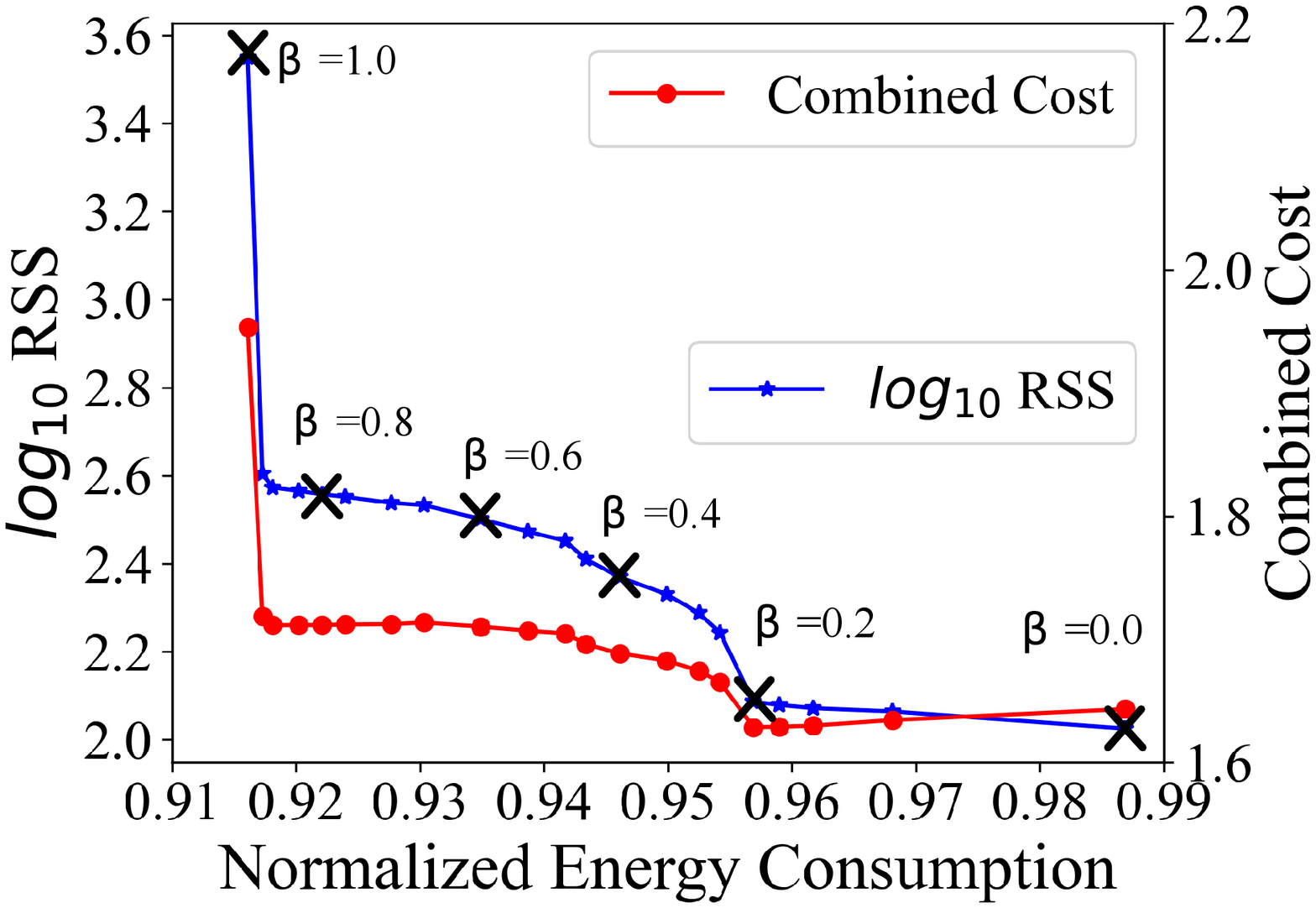}
		\label{fig:preference}
	}
	\caption{Performance comparison for different parameters of the proposed method.}
	\label{fig:cells}
\end{figure*}

\cparagraph{Mobility range}\label{sec:mobility-range}
Three policies of plan generation are compared in EPOS: \emph{balance}, \emph{min sensing mismatch} and \emph{min sensing inefficiency}. A simple test is implemented in the basic synthetic scenario to compare different numbers of visited cells set in the input, see Algorithm 1. As shown in Fig.~\ref{fig:visited_cells}, the higher the number of cells a drone visits, the higher the flying energy is, and the lower the allocated hovering/sensing energy is, which increases the mission inefficiency. To balance the sensing mismatch and mission inefficiency, the policies of min sensing mismatch ($|J_u| = 1|2$, the strategy chooses 1 or 2 cells randomly), min mission inefficiency ($|J_u| = 3|4$), and balance ($|J_u|=1|2|3|4$) are empirically designed. Thus, three policy-based methods for EPOS are proposed that are referred to as \emph{EPOS-mismatch}, \emph{EPOS-inefficiency} and \emph{EPOS-balance} respectively.

\cparagraph{Number of plans}\label{sec:num-of-plans}
See Fig.~\ref{fig:plans_num}, as the number of generated plans increases, both sensing mismatch and mission inefficiency decrease and converge to $64$. Thus, $64$ plans for each agent are generated in EPOS. 

\cparagraph{Sensing allocation: proportional vs. mean}\label{sec:sensing-allocation}
Fig.~\ref{fig:no_allocation} illustrates the performance comparison between proportional and mean sensing allocation defined in Eq.(\ref{eq:sense_allocation}). The mean sensing allocation has higher inefficiency than proportional one. Using the Mann-Whitney U test, the sensing mismatch distribution of the two methods comes with $p=0.08$, the one of mission inefficiency has $p=0.003$. The results demonstrate a statistically higher performance when sensor values collected are proportional, i.e. a similar mission mismatch but a $1.56\%$ higher mission efficiency.

\cparagraph{Energy utilization}\label{sec:energy-utilization}
Fig.~\ref{fig:utilization} illustrates the changes of minimum energy utilization ratio $e$ (when $p = P$ in Eq.(\ref{eq:energy_ratio})) and the combined cost as the parameter $\delta$ increases. The combined cost is the sum of normalized total energy consumption, sensing mismatch and mission inefficiency. According to the results, a $\delta=8$ is selected for the experimental settings.

\cparagraph{Agents' behavior and Pareto optimality}\label{sec:beta-parameter}
Fig.~\ref{fig:preference} illustrates the effect of agents' behavior by varying the parameter $\beta$~\cite{pournaras2018decentralized}. As $\beta$ increases from $0$ to $1$, agents reduce the energy cost of their selected plans, while increasing the sensing mismatch. This is because drones with higher $\beta$ choose a plan with lower energy cost, i.e., the plan with lower energy utilization ratio $e$, and the total sensing values collected is reduced according to Eq.~(\ref{eq:hover_energy}) and~(\ref{eq:total_sense}). To minimize the $log_{10}$ RSS, the value of $\beta=0$ is selected in the proposed methods, while $\beta=0.2$ is the calculated Pareto optimal point referred to as \emph{EPOS-Pareto}, minimizing the combined cost.

\subsection{Evaluation on the basic synthetic scenario}

\begin{figure*}[!htb]
	\centering
        \subfigure[Total Energy Consumption ($kJ$).]{
		\includegraphics[height=2.9cm,width=5cm]{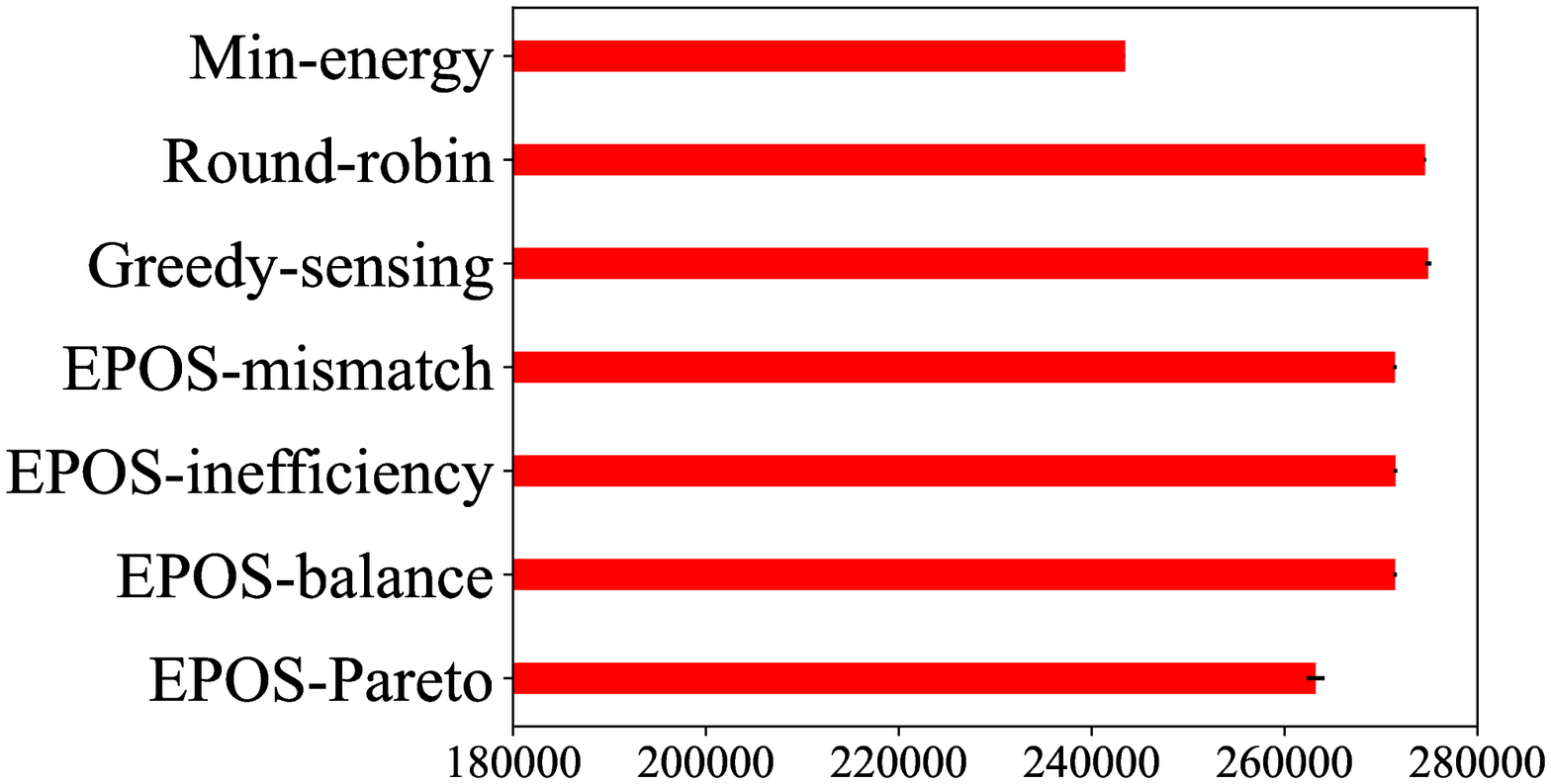}
		\label{fig:basic_energy}
	}
         \subfigure[Sensing Mismatch.]{
		\includegraphics[height=2.9cm,width=3.8cm]{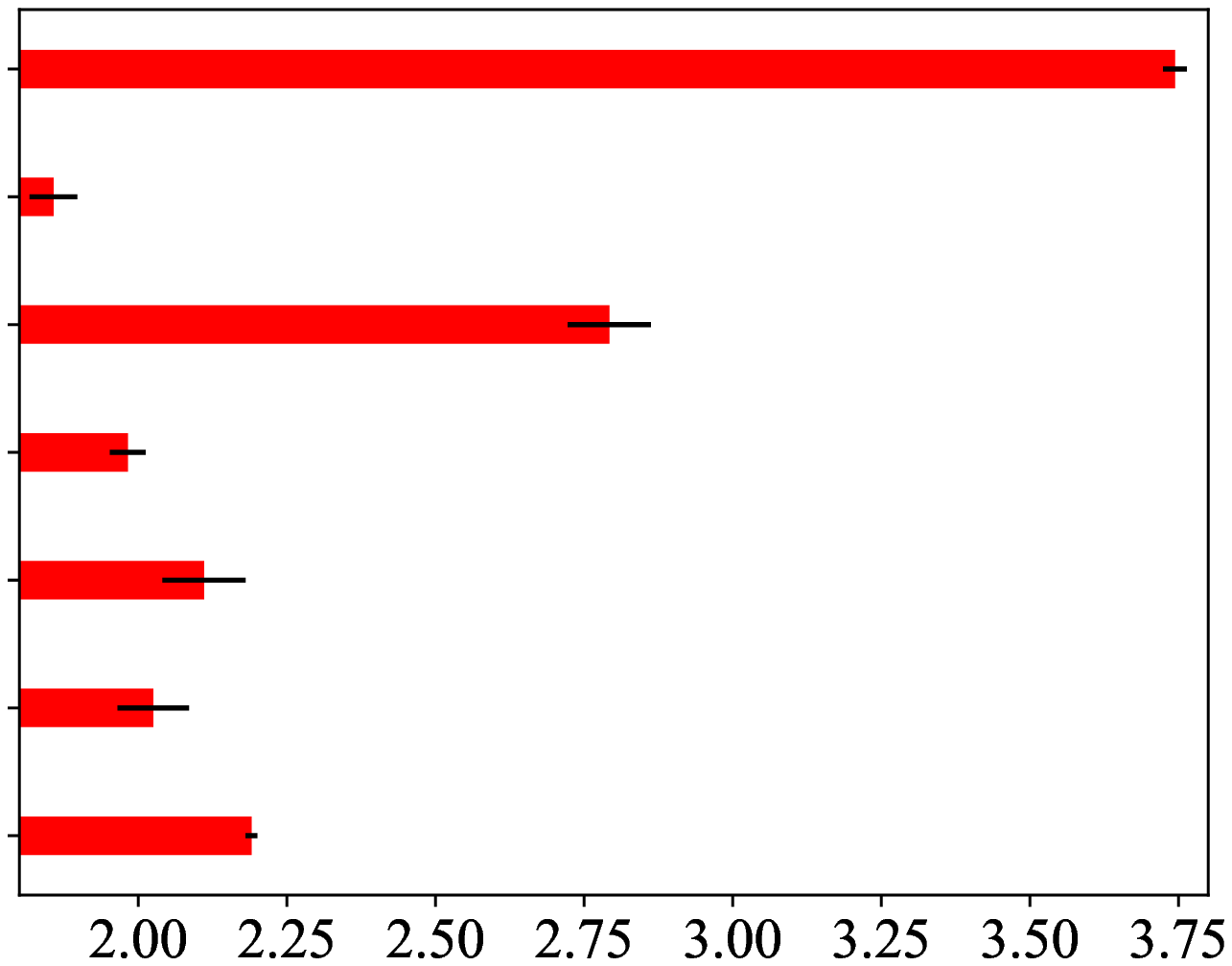}
		\label{fig:basic_global}
	}
         \subfigure[Mission Inefficiency ($\%$).]{
		\includegraphics[height=2.9cm,width=3.8cm]{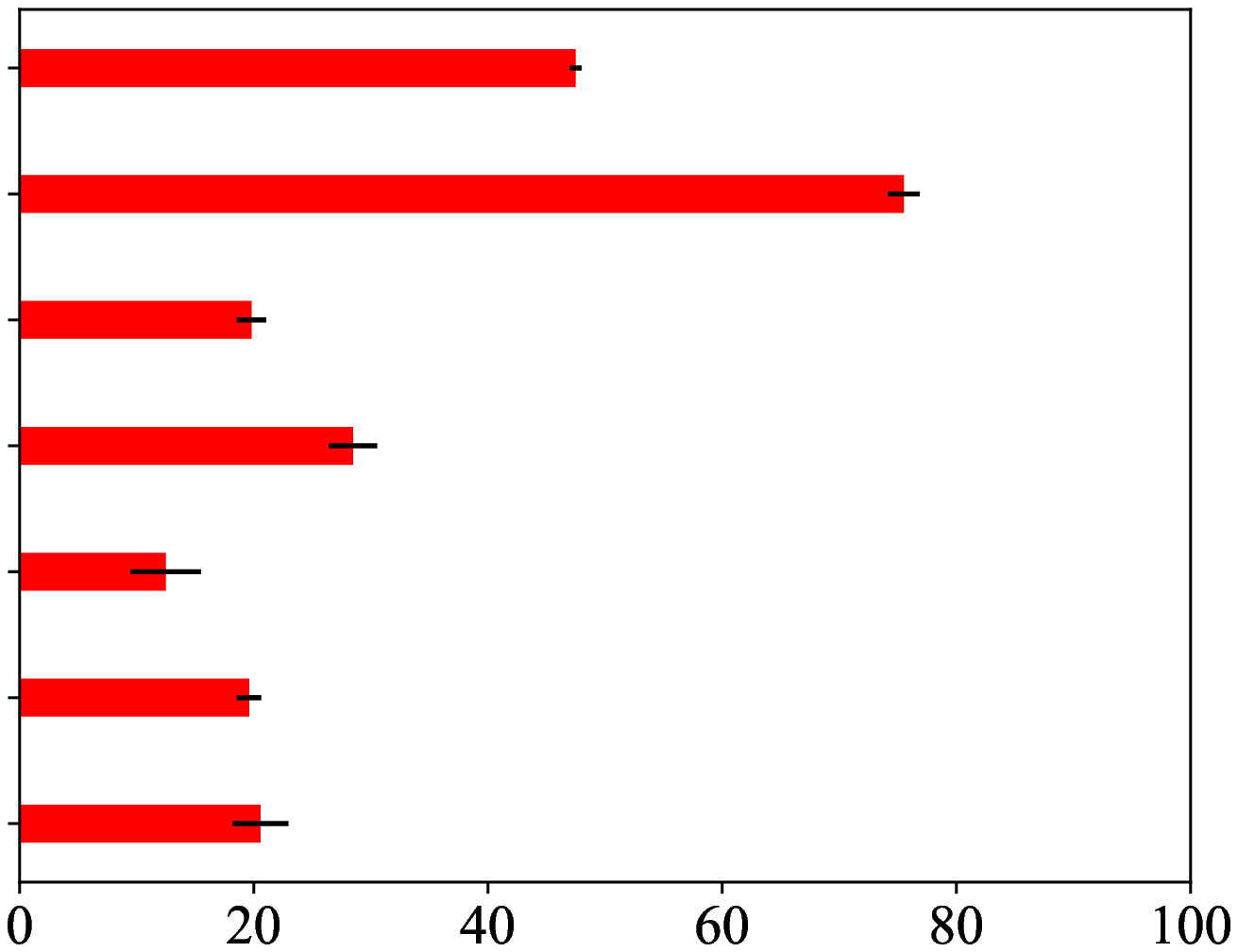}
		\label{fig:basic_mission}
	}
         \subfigure[Combined Cost.]{
		\includegraphics[height=2.9cm,width=3.8cm]{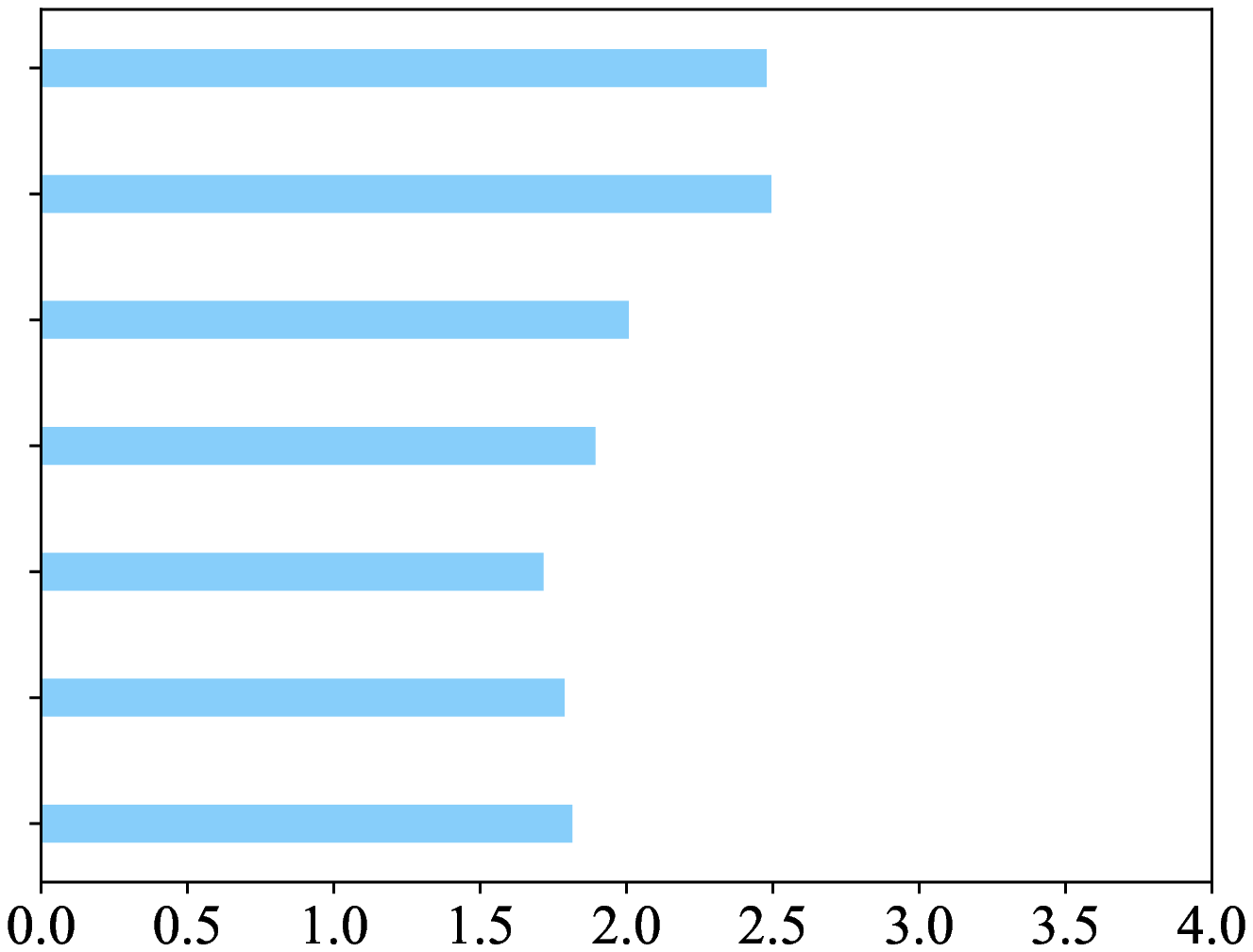}
		\label{fig:average_score}
	}
    \caption{Performance comparison of the six methods on the basic synthetic scenario: 4 base stations, 64 cells and 20000 total target values.}
    \label{fig:basic}
\end{figure*}

As shown in Fig.~\ref{fig:basic}, the three optimization methods based on EPOS coordinate drones to select the plans with the minimum energy utilization ratio $e$ and result in the lowest total energy consumption except \emph{Min-energy} ($\beta=1.0$) and \emph{EPOS-Pareto} ($\beta=0.2$). The coordination of drones achieves the lowest sensing mismatch compared to baselines. Therein, \emph{EPOS-balance} (with a total energy consumption of $271467 kJ$ and sensing mismatch of $2.02$) has the mission inefficiency of $19.6\%$ that is close to \emph{Greedy-sensing}; \emph{EPOS-mismatch} sacrifices mission efficiency to obtain a very low sensing mismatch among three policies, and just $0.12$ higher than \emph{Round-robin}; \emph{EPOS-inefficiency} has the lowest mission inefficiency of $12.49\%$ among all methods. In contrast, without coordination, the \emph{Min-energy} method lowers energy consumption to $233,484 kJ$ at a cost of higher sensing mismatch ($3.74$) and mission inefficiency ($47.5\%$). \emph{Greedy-sensing} and \emph{Round-robin} also come with higher sensing mismatch ($2.79$) and mission inefficiency ($75.51\%$) respectively. 

Fig.~\ref{fig:average_score} illustrates the combined cost calculated from Fig.~\ref{fig:basic_energy}, \ref{fig:basic_global} and \ref{fig:basic_mission}. The proposed method is superior to baseline methods especially when combining all performance metrics.

\subsection{Evaluation on the complex synthetic scenario}
There are four dimensions we study here: (1) the number of dispatched drones used in the sensing mission, (2) the total target values to collect, (3) the number of cells into which the same map is split, and (4) the number of base stations. Fig.~\ref{fig:heatmap} shows the performance comparison between \emph{EPOS-balance}, \emph{Greedy-sensing} and \emph{Round-robin} methods. 

\begin{figure}[!htb]
	\centering
	\subfigure[Total Energy Consumption (Base stations: up=4, down=16).]{
		\includegraphics[width=0.48\textwidth]{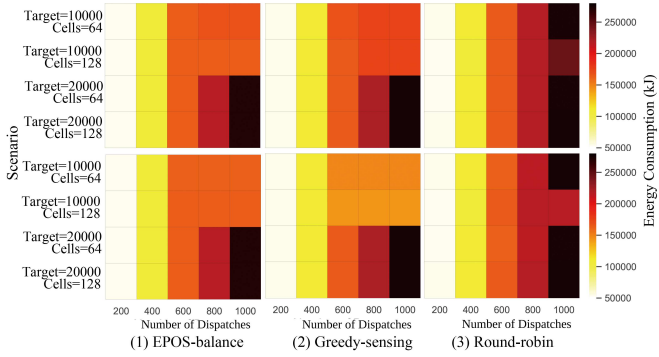}
		\label{fig:heatmap_energy}
	}
	\subfigure[Sensing Mismatch (Base stations: up=4, down=16).]{
		\includegraphics[width=0.48\textwidth]{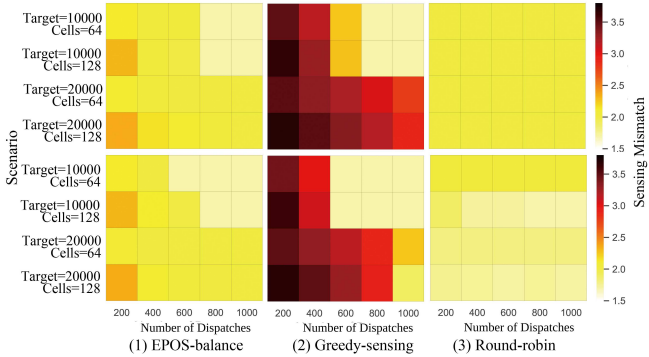}
		\label{fig:heatmap_global}
	}
	\subfigure[Mission Inefficiency (Base stations: up=4, down=16).]{
		\includegraphics[width=0.48\textwidth]{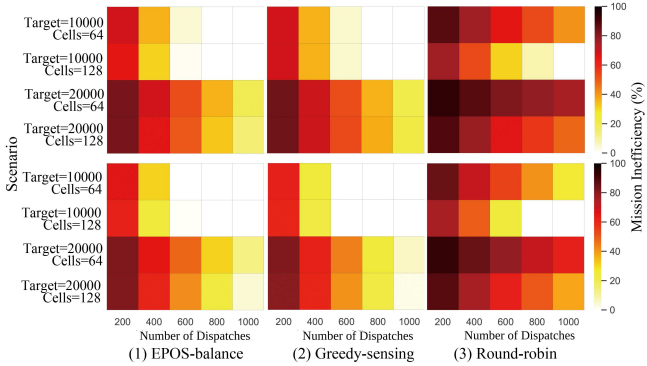}
		\label{fig:heatmap_mission}
	}
	\caption{Performance comparison under varying parameters: total target values (10000 and 20000), the number of cells (64 and 128), the number of base stations (heatmap triplets: up=4, down=16), and the number of drone dispatches (from 200 to 1000).}
	\label{fig:heatmap}
\end{figure}

\cparagraph{Number of dispatches}
Take an example of the complex synthetic scenario (target=$20000$, cells=$64$, base stations=$4$). As the number of dispatches increases, the total energy consumption of \emph{EPOS-balance} rises, while mission inefficiency decreases proportionally (from $271467kJ$, $83\%$ to $54268kJ$, $19\%$ in Fig.~\ref{fig:heatmap_energy} and Fig.~\ref{fig:heatmap_mission}). This shows that drones take full advantage of energy resources to collect sensor values efficiently. Compared to \emph{Round-robin}, \emph{EPOS-balance} collects more data (exceeds the average of $38.5\%$) with a lower number of dispatches and energy. Furthermore, as the number of dispatches decreases, \emph{EPOS-balance} shows a lower increase of sensing mismatch (increased by $4.21\%$) than \emph{Greedy-sensing} (increased by $26.17\%$). If some drones fail to be dispatched due to attacks or other factors, \emph{EPOS-balance} is proved to effectively mitigate the penalties of over-sensing and under-sensing, which validates the resilience of the proposed method.

\cparagraph{Total target values}
If the total target values to collect decrease from $20000$ to $10000$, the mission inefficiency of \emph{EPOS-balance} reduces to $0$ and the energy consumption keeps constant when the number of dispatches exceeds $500$. This is because the total target values represent the total hovering time of drones, and \emph{EPOS-balance} does not waste energy resources when drones complete their sensing tasks. In Fig.~\ref{fig:heatmap_global}, the sensing mismatch of \emph{Greedy-sensing} is highly sensitive to the change of target values. It increases by $1.09$, while \emph{EPOS-balance} increases only by $0.34$.

\cparagraph{Number of cells}
If the number of cells increases from $64$ to $128$, the size of each cell decreases, and the distance between any two cells decreases, which reduces the travel distance of drones. As a consequence, drones spend more energy on collecting sensor values according to Eq.(\ref{eq:hover_energy}) and Eq.(\ref{eq:total_sense}), which reduces the mission inefficiency. In Fig.~\ref{fig:heatmap_mission}, \emph{EPOS-balance} shows a higher decrease in mission inefficiency (decreased by $4\%$) than \emph{Greedy-sensing} (decreased by $1\%$) as the number of cells increases. Furthermore, \emph{EPOS-balance} collects higher quality data with a lower number of cells and drones as shown in Fig.~\ref{fig:heatmap_global}.

\cparagraph{Number of base stations}
If the number of base stations increases from $4$ to $16$, the travel distances of drones reduce and the total sensor values collected rise. As a result, mismatch and inefficiency of \emph{EPOS-balance} decrease. \emph{EPOS-balance} shows an overall superior performance as targets and base stations vary. 

In summary, \emph{EPOS-balance} has the lowest sensing mismatch (close to \emph{Round-robin}) and mission inefficiency (close to \emph{Greedy-sensing}) as the variables in the environment change. This confirms that \emph{EPOS-balance} has the highest performance in overall under the same battery constraints and different sensing scenarios.

\subsection{Traffic monitoring by coordinated drones} \label{sec:real}
This paper shows the significant and broad impact of the proposed coordinated sensing method on a transportation scenario, where a swarm of drones perform traffic monitoring. The evaluation uses real-world data from vehicle trajectories collected by a swarm of drones in the congested downtown area of Athens, Greece~\cite{barmpounakis2020new}. This application scenario envisions a large-scale and long-term monitoring of traffic congestion using coordinated drones to measure and predict traffic patterns. As shown in Fig.~\ref{fig:traffic_map}, a swarm of $10$ drones hover over $10$ cells for $20$ time periods to record traffic streams of $6$ types of vehicles: \emph{car}, \emph{taxi}, \emph{bus}, \emph{medium vehicle}, \emph{heavy vehicle} and \emph{motorcycle}. Each drone departs from and returns to one of the two base stations and hovers up to $25$ minutes at a single cell.

\begin{figure}
	\centering
	\includegraphics[scale=0.4]{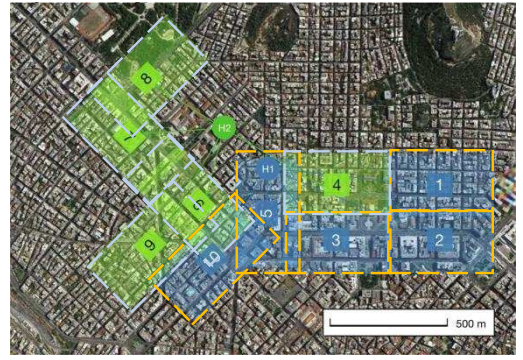}
	\caption{A swarm of $10$ drones hovering over the central business district of Athens over five days to record traffic flows in a congested area of a $1.3 km^2$ with more than $100$ km lanes of road network, around $100$ busy intersections, more than $50$ bus stops and close to half a million trajectories. There are $5$ drones that depart from cell $H1$ and travel to the cell $1$, $2$, $3$, $5$ and $10$ (shown in blue), whereas the other $5$ drones set off from $H2$ and hover over the rest of cells (shown in green)~\cite{barmpounakis2020new}.}
	\label{fig:traffic_map}
\end{figure}

\begin{figure*}[!htb]
	\centering
	\subfigure[Spatio-temporal distribution of vehicles.]{
		\includegraphics[height=5.6cm, width=\textwidth]{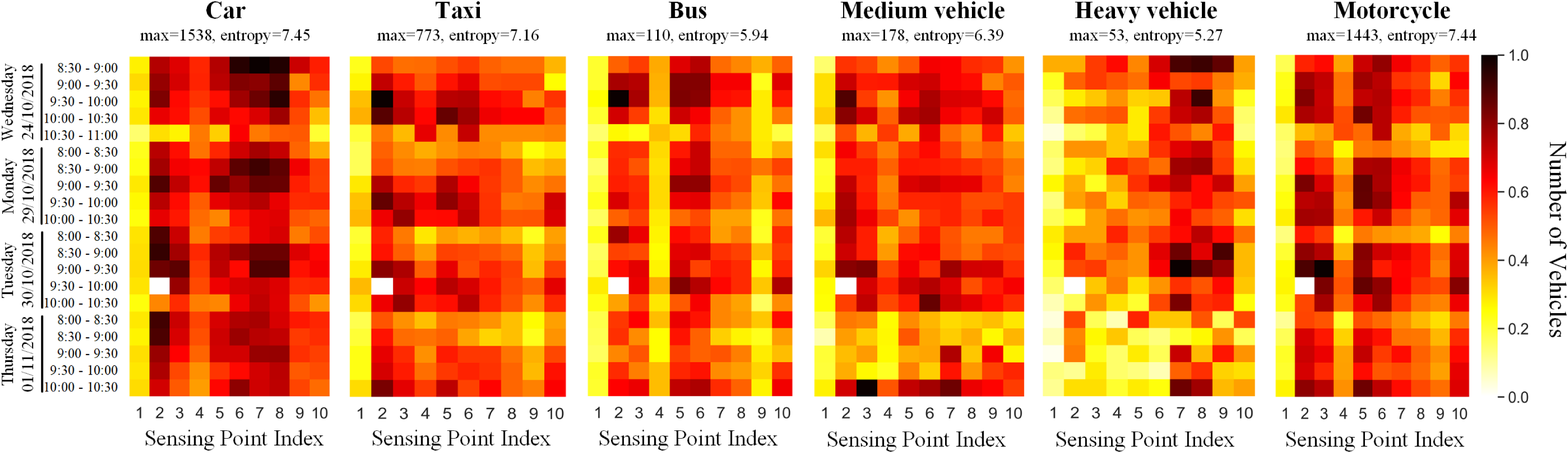}
		\label{fig:transport_dist}
	}
	\subfigure[Accuracy (correlation) of vehicles detection.]{
		\includegraphics[height=2.9cm, width=\textwidth]{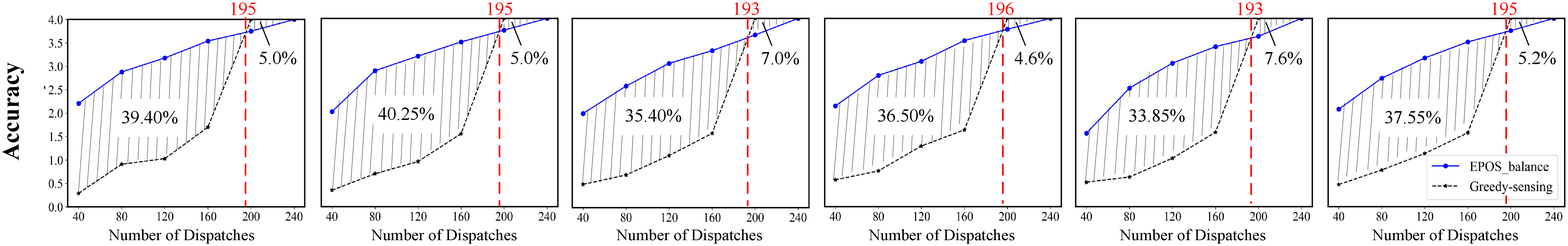}
		\label{fig:transport_global}
	}
	\subfigure[Efficiency of vehicles detection.]{
		\includegraphics[height=2.9cm, width=\textwidth]{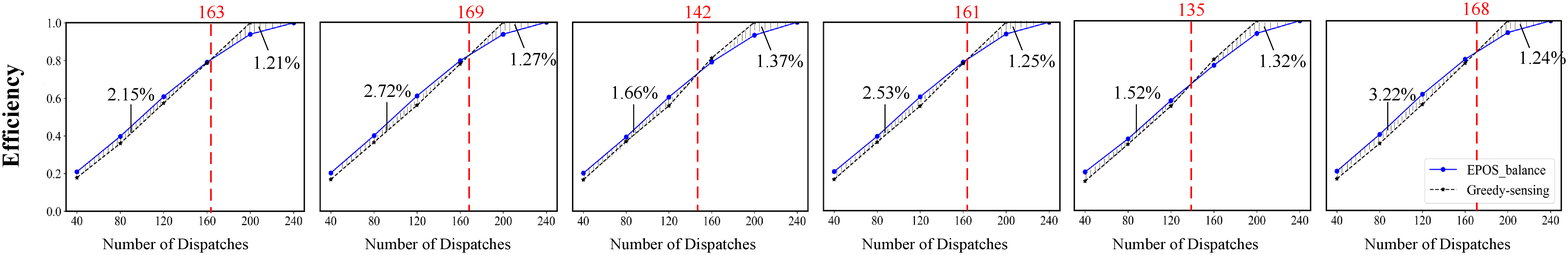}
		\label{fig:transport_mission}
	}
	\caption{The comparison results among six types of vehicles (car, taxi, bus, medium vehicle, heavy vehicle, and motorcycle) in a downtown area of Athens based on the open traffic monitoring data collected by a drone swarm.}
	\label{fig:transport}
\end{figure*}

\cparagraph{Sensing model}
We model the application scenario of coordinated drones that perform traffic sensing. To improve the sensing quality, we calculate the required sensing tasks (or targets) to be proportional to the spatio-temporal normalized distribution of vehicles shown in Fig.~\ref{fig:transport_dist}. We assume that typical distributions can be derived from historical data~\cite{pNEUMA_dataset}. The higher the likelihood a vehicle type drives through a cell, the higher the target value is, and $T_n \leq (25 \times 20)$, $n = {1,2,...,10}$. Over $20$ time periods (30 min), a certain number of drones in each period is dispatched.

\cparagraph{Metrics and Baselines}
The performance metrics are adjusted to evaluate more directly the observability of vehicles in this traffic monitoring scenario: \emph{accuracy} and \emph{efficiency}. Accuracy $A$ denotes the correlation between the distributions of observed vehicles and actual existing ones. Efficiency $E$ denotes the ratio of actual vehicles captured by the drones over the total ones. They are formulated as follows:
\begin{align}
    & A = \log_{10} (\frac{1}{\sum_{m=1}^{M}(V^*_m - V_m)^2}),\\
    & E = \frac{\sum_{m=1}^{M} V^*_m}{\sum_{m=1}^{M} V_m},
\end{align}
\noindent where $V_m$ indicates the total number of vehicles at time unit $m$ acquired from pNEUMA~\cite{barmpounakis2020new} (we set one minute as a time unit), and $V^*_m$ denotes the number of vehicles monitored by drones at time unit $m$ using the proposed method, $V^*_m = \sum_{n \in \mathcal{N}} h^u_{m,n} \cdot V_m$. Moreover, the \emph{Greedy-sensing} method is used as the baseline. It requires a low energy consumption and emulates the vehicles data collection in pNEUMA~\cite{barmpounakis2020new}.

\cparagraph{Experimental results}
Fig.\ref{fig:transport} illustrates the performance comparison between the proposed \emph{EPOS-balance} and the baseline for varying numbers of dispatched drones used in traffic monitoring. The results in Fig.~\ref{fig:transport_global} show that the accuracy of \emph{EPOS-balance} among six types of vehicles is significantly higher and more stable than that of \emph{Greedy-sensing} when the number of dispatches is lower than $200$. This is because with the total of $200$ drones' dispatches under \emph{Greedy-sensing}, each cell within each period is monitored by exactly one drone. With a lower number of drones' dispatches (i.e., scarce resources), however, \emph{EPOS-balance} coordinates drones to monitor all cells over all time periods, preventing over-sensing and under-sensing. Fig.~\ref{fig:transport_mission} illustrates that \emph{EPOS-balance} is relatively more efficient than \emph{Greedy-sensing} with no more than $160$ dispatches. This is because \emph{EPOS-balance} coordinates drones to collect sensor values that are proportional to the distributions of vehicles, which increases the number of vehicles observed by drones. Note that there is a strong linear relationship between the threshold for the number of dispatches (see Fig.~\ref{fig:transport_mission}) and the entropy of vehicles distribution (see Fig.~\ref{fig:transport_dist}), with a Pearson coefficient correlation of $0.93$ and corresponding p-value of $0.007$.

In summary, for lower than $160$ dispatches, \emph{EPOS-balance} is approximately $46.45\%$ more accurate and $2.88\%$ more efficient than \emph{Greedy-sensing} among six vehicle types monitored. This verifies the remarkable performance of the proposed method under a scarce number of drone resources, requiring less than $80\%$ of the drones to achieve equivalent or higher performance than \emph{Greedy-sensing}.

\section{CONCLUSION AND FUTURE WORK}\label{sec:conclusion}
In conclusion, this paper illustrates a decentralized coordination model to solve the task self-assignment problem for scalable, resilient and flexible spatio-temporal sensing by a swarm of drones. To ensure energy-aware and efficient coordination of sensing, this paper introduces a novel plan generation strategy with three policies. Extensive experiments demonstrate that the proposed method is adaptive to complex sensing scenarios and has better sensing performance than existing methods. The evaluation on real-world data shows that the designed model has a strong potential in optimizing traffic monitoring in the Smart Cities using a limited number of drones.

Nevertheless, the proposed method can be further improved towards several research avenues: (1) Use of real-world datasets in other applications of Smart Cities, including last-mile delivery, disaster response, and smart farming. (2) Use of other learning method, such as multi-agent reinforcement learning algorithm, to extend the study of coordinated drones with charging capabilities and obstacle/collision avoidance. (3) Use of efficient wireless communication technology and special hardware to implement the coordination capability on board and online.

\section*{ACKNOWLEDGEMENTS}

This research is supported by a UKRI Future Leaders Fellowship (MR-/W009560/1): \emph{Digitally Assisted Collective Governance of Smart City Commons– ARTIO}, the White Rose Collaboration Fund: \emph{Socially Responsible AI for Distributed Autonomous Systems} and a 2021 Alan Turing Fellowship. Thanks to Emmanouil Barmpounakis and Nikolas Geroliminis for their support in using the pNEUMA dataset. 

\section*{REFERENCES}

\def\refname{\vadjust{\vspace*{-1em}}} 
\bibliographystyle{unsrt}
\bibliography{DataCollection}

\begin{appendices}
    \section{Power consumption model}
Drones spend energy to surpass gravity force and counter drag forces due to wind and forward motions. A drone controls the speed of each rotor to achieve the thrust $T$ and pitch $\theta$ necessary to stay aloft and travel forward at the desired velocity while balancing the weight and drag forces. For a drone with mass $m_{b}$ and its battery with mass $m_{c}$, we define the total required thrust as follows:
\begin{equation}
	\mathcal{T} = (m_{b} + m_{c}) \cdot g + F_{d},
\end{equation}
\noindent where $g$ is the gravitational constant, and $F_{d}$ is the drag force that depends on air speed and air density. For steady flying, the drag force can be calculated by the pitch angle $\theta$ as:
\begin{equation}
	F_{d} = (m_{b} + m_{c}) \cdot g \cdot tan(\theta).
=\end{equation}

Building on the model in~\cite{Stolaroff2018}, the power consumption with forward velocity and forward pitch is given by:
\begin{equation}
	P^\mathsf{f} = (v \cdot sin\theta + v_{i}) \cdot \frac{\mathcal{T}}{\epsilon},
	\label{power_flight}
\end{equation}
\noindent where $v$ is the average ground speed; $\epsilon$ is the overall power efficiency of the drone;  $v_{i}$ is the induced velocity required for given $T$ and can be found by solving the nonlinear equation:
\begin{equation}
	v_{i} = \frac{2 \cdot \mathcal{T}}{\pi \cdot d^{2} \cdot r \cdot \rho \cdot \sqrt{(v \cdot cos\theta)^{2} + (v \cdot sin\theta + v_{i})^{2}}},
\end{equation} 
\noindent where $d$ and $r$ are the diameter and number of drone rotors; $\rho$ is the density of air. 

Moreover, the power consumption for hovering of a drone is calculated by:
\begin{equation}
	P^\mathsf{h} = \frac{\mathcal{T}^{3/2}}{\epsilon \cdot \sqrt{\frac{1}{2} \pi \cdot d^{2} \cdot n \cdot \rho}}.
	\label{power_hover}
\end{equation} 

\section{Mobility and Sensing Quality: Analytical Results}

Two theorems are introduced that link the performance metrics of mission inefficiency and sensing mismatch with the mobility of the coordinated drones, in particular with their flying coverage modelled by the number of visiting cells. 

\begin{theorem}
In a mission of $\mathcal{U}$ drones, starting from their base stations, flying with a constant ground speed over an area consisting of $N$ cells to collect sensor data, and returning back consuming all energy of their battery, the mission inefficiency is proportional to the number of random visited cells $|J_u|$:
\begin{equation}
        \alpha_1 \cdot |J_u| \rightarrow f_1(J_u) = 1 - \frac{ \sum_{u \in \mathcal{U}} S(J_u) }{\sum_{n \in \mathcal{N}} T_n}, 
        \label{eq:effi_theo}
\end{equation}

\noindent if the drones collect sensor values over the visited cells proportionally to required target values, where $\alpha_1$ is a positive constant and $f_1(J_u)$ is the function of mission inefficiency according to the optimization objective of Eq.(\ref{eq:model_optimal}).\label{theorem}
\end{theorem}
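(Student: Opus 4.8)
The plan is to reduce the mission inefficiency $f_1(J_u)$ to an explicit affine function of $|J_u|$ and then read off the positive slope $\alpha_1$. First I would use the hypothesis that each drone returns having consumed its entire battery budget, so that the energy constraint of Eq.~(\ref{eq:energy_consump}) is tight and Eq.~(\ref{eq:hover_energy}) holds with equality, $E^\mathsf{h}(J_u) = C_u \cdot e - P^\mathsf{f}_u \cdot \tau(J_u)$. This splits the fixed budget $C_u \cdot e$ into a flying part that grows with coverage and a hovering part, which is what actually produces sensor values through Eq.~(\ref{eq:total_sense}).

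The key modelling step is to express the flying time $\tau(J_u)$ as a linear function of $|J_u|$. Since the drones fly at a constant ground speed $v$, the flying time equals the greedy path length divided by $v$. For $|J_u|$ cells connected by the nearest-neighbour tour together with the fixed round trip to the base station $B$, I would argue that the expected hop distance between consecutive visited cells is approximately a constant $\bar d$, so the tour length is $\bar d \cdot |J_u|$ up to a fixed base-station offset. Hence $\tau(J_u) = \tfrac{\bar d}{v}|J_u| + \tau_0$ and $E^\mathsf{f}(J_u) = P^\mathsf{f}_u \tau(J_u)$ is affine and increasing in $|J_u|$.

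Substituting back through Eq.~(\ref{eq:total_sense}), the total sensor value $S(J_u) = \tfrac{f}{P^\mathsf{h}_u}\bigl(C_u \cdot e - P^\mathsf{f}_u \tau(J_u)\bigr)$ becomes a strictly decreasing affine function of $|J_u|$. I would then invoke the proportional-allocation hypothesis: summing Eq.~(\ref{eq:sense_allocation}) over the visited cells with $T(J_u)$ from Eq.~(\ref{eq:total_target}) gives $\sum_{n \in J_u} S_{u,n} = S(J_u)$, so the proportional scheme conserves each drone's total and the swarm's total collection is exactly $\sum_{u} S(J_u)$. Inserting this into the inefficiency of Eq.~(\ref{eq:model_optimal2}), equivalently Eq.~(\ref{eq:effi_theo}), yields $f_1(J_u) = c_0 + \alpha_1 |J_u|$ with positive slope $\alpha_1 = \tfrac{f \bar d}{v \sum_{n} T_n}\sum_{u} \tfrac{P^\mathsf{f}_u}{P^\mathsf{h}_u}$, which establishes the claimed proportional dependence.

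The main obstacle is precisely the linearity of $\tau(J_u)$ in $|J_u|$: the greedy tour length over randomly placed cells is in general sublinear (closer to $\sqrt{|J_u|}$ for uniformly scattered points), so the argument rests on the modelling assumption that, for the small coverage counts $|J_u| \in \{1,2,3,4\}$ used by the policies and the roughly uniform inter-cell spacing of the grid, the mean hop distance $\bar d$ is effectively constant. I would make this assumption explicit and note that under it the flying energy, and therefore the mission inefficiency, grows linearly rather than sublinearly in the number of visited cells.
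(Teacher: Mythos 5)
Your proposal is correct and takes essentially the same route as the paper's own proof: a tight energy budget split into flying and hovering via Eq.~(\ref{eq:hover_energy}), an affine model of flight time in $|J_u|$ (the paper's Eq.~(\ref{eq:flight_appen}), $\tau(J_u) \approx (|J_u|-1)\cdot\tilde{\tau} + 2\tilde{\tau}_B$, is exactly your constant-hop-distance assumption), proportional allocation conserving each drone's total, and hence an affine, increasing inefficiency whose slope is the claimed positive constant $\alpha_1$. Your closing caveat -- that greedy tour lengths over scattered points grow sublinearly, so linearity of $\tau(J_u)$ in $|J_u|$ must be posited explicitly for the small coverage counts used by the policies -- is a point the paper leaves implicit by simply assuming a fixed mean inter-cell travel time $\tilde{\tau}$ between random cells.
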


\begin{proof}

    Based on Eq.~(\ref{eq:hover_energy}) and~(\ref{eq:total_sense}) and since each drone uses all its battery capacity for flying and hovering, the number of collected sensor values over the cells $J_u$ is: 
    
    \begin{equation}
        S(J_u) = \sum_{n \in \mathcal{N}} S_{u, n} = \frac{C_u \cdot e - P^\mathsf{f}_u \cdot \tau(J_u)}{P^\mathsf{h}_u \cdot f}.
        \label{eq:sense_appen1}
    \end{equation}
    
    \noindent Eq.(\ref{eq:model_optimal}) can be reformulated as:

    \begin{equation}
        1 - \frac{ \sum_{u \in \mathcal{U}} \sum_{n \in \mathcal{N}} S_{u,n} }{\sum_{n \in \mathcal{N}} T_n}=1 - \frac{ \sum_{u \in \mathcal{U}} S(J_u) }{\sum_{n \in \mathcal{N}} T_n}\coloneqq f_1(J_u) ,\label{eq:inefficiency-appen}
    \end{equation}

    \noindent given that $\sum_{u \in \mathcal{U}} \sum_{n \in \mathcal{N}} S_{u,n} = \sum_{u \in \mathcal{U}} S(J_u)$. The total flying time $\tau(J_u)$ of a drone $u$ can be modelled as: 

    \begin{equation}
        \tau(J_u) \approx (|J_u|-1) \cdot \tilde{\tau} + 2 \cdot  \tilde{\tau}_{B},
        \label{eq:flight_appen}
    \end{equation}

    \noindent where $\tilde{\tau}$ is the mean expected traveling time between any two random cells and $\tilde{\tau}_{B}$ is the mean expected traveling time between the base station of the drone and a random cell. Assume the number of random visited cells increases from $|J_u|$ to $|J_u'|$, where $|J_u| < |J_u'|$. Then, the total flying time without hovering is also likely to increase as $\tau(J_u) < \tau(J_u')$ given that each drone $u$ flies with a constant ground speed. The influence $f_1(J_u') - f_1(J_u)$ on the mission inefficiency based on Eq.(\ref{eq:inefficiency-appen}) is calculated as follows: 
    
    \begin{equation}
        \begin{split}
            f_1(J_u') - f_1(J_u) 
                &= \frac{\sum_{u \in \mathcal{U}} (\overbrace{S(J_u)}^{Eq.(\ref{eq:sense_appen1})} - \overbrace{S(J_u')}^{Eq.(\ref{eq:sense_appen1})})}{\sum_{n \in \mathcal{N}} T_n} \\
                &= \frac{\sum_{u \in \mathcal{U}} \frac{P^\mathsf{f}_u}{P^\mathsf{h}_u \cdot f}}{\sum_{n \in \mathcal{N}} T_n} \cdot (\overbrace{\tau(J_u')}^{Eq.(\ref{eq:flight_appen})} - \overbrace{\tau(J_u)}^{Eq.(\ref{eq:flight_appen})})\\
                &\approx \frac{\sum_{u \in \mathcal{U}} \frac{P^\mathsf{f}_u}{P^\mathsf{h}_u \cdot f}}{\sum_{n \in \mathcal{N}} T_n} \cdot \tilde{\tau} \cdot (|J_u'| - |J_u|)\\
                & \leftarrow \alpha_1 \cdot (|J_u'| - |J_u|).
        \end{split}
        \label{eq:effi_prove}
    \end{equation}
    Since all parameters in $\frac{\sum_{u \in \mathcal{U}} \frac{P^\mathsf{f}_u}{P^\mathsf{h}_u \cdot f}}{\sum_{n \in \mathcal{N}} T_n} \cdot \tilde{\tau}$ are positive, $\alpha_1 > 0$, and therefore the mission inefficiency is proportional to the number of random visited cells.
\end{proof}

\begin{theorem}
In a mission of $\mathcal{U}$ drones, starting from their base stations, flying with a constant ground speed over an area consisting of $N$ cells to collect sensor data, and returning back consuming all energy of their battery, the sensing mismatch is inverse proportional to the number of random visited cells $|J_u|$:
    \begin{equation}
        \alpha_2 \cdot |J_u| \rightarrow f_2(J_u) = \sum_{n \in \mathcal{N}} (T_n - \sum_{u \in \mathcal{U}} S_{u,n})^2, 
        \label{eq:rss_theo}
    \end{equation}
    if and only if $|J_u'|+|J_u| < N$ when increasing the number of visited cells from $|J_u|$ to $|J_u'|$ and if the drones collect sensor values over the visited cells proportionally to required target values, where $\alpha_2$ is a negative constant and $f_2(J_u)$ is a function of sensing mismatch according to the optimization objective of Eq. (\ref{eq:model_optimal}).\label{theorem2} 
\end{theorem}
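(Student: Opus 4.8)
The plan is to mirror the structure of the proof of Theorem~\ref{theorem}, but to work with the quadratic mismatch $f_2(J_u)=\sum_{n\in\mathcal{N}}(T_n-\Sigma_n)^2$, where I abbreviate $\Sigma_n \coloneqq \sum_{u\in\mathcal{U}} S_{u,n}$. As before, I would substitute the expression for the collected sensor values from Eq.~(\ref{eq:sense_appen1}), the proportional allocation from Eq.~(\ref{eq:sense_allocation})--(\ref{eq:total_target}), and the linear flying-time model of Eq.~(\ref{eq:flight_appen}), so that the only free quantity is the number of visited cells $|J_u|$. First I would compare two mobility levels $|J_u|<|J_u'|$ and write the change in mismatch through the algebraic factorization
\begin{equation*}
    f_2(J_u')-f_2(J_u)=\sum_{n\in\mathcal{N}}(\Sigma_n-\Sigma_n')\,(2T_n-\Sigma_n-\Sigma_n'),
\end{equation*}
which isolates, cell by cell, how a change in the collected amount interacts with the distance to the target.

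Next I would exploit the sensing-budget constraint Eq.~(\ref{eq:sense}), which keeps the swarm in the under-sensing regime $\Sigma_n\le T_n$; this makes every factor $2T_n-\Sigma_n-\Sigma_n'\ge 0$, so the sign of each cell's contribution is governed solely by $\Sigma_n-\Sigma_n'$. I would then partition $\mathcal{N}$ into three groups: cells covered under both mobility levels, cells newly covered when moving to $|J_u'|$, and cells that remain uncovered. Uncovered cells contribute nothing to the difference; newly covered cells have $\Sigma_n=0<\Sigma_n'$ and therefore contribute a strictly \emph{negative} term (the coverage gain); cells covered in both see their per-cell allocation diluted, because Theorem~\ref{theorem} already shows the total collected amount $S(J_u)$ decreases with $|J_u|$ while being spread over more cells, giving $\Sigma_n'<\Sigma_n$ and a \emph{positive} term (the dilution cost).

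The crux is to show that the coverage gain dominates the dilution cost, so that the net difference is negative and, to first order, linear in $|J_u'|-|J_u|$ with a negative slope $\alpha_2$. Taking expectations over the random cell selection with (approximately uniform) targets, the expected number of newly covered cells scales like $|J_u'|(N-|J_u|)/N$ while the expected overlap scales like $|J_u'|\,|J_u|/N$; substituting these counts together with the linear dependence of $S(J_u)$ on $|J_u|$ from Eq.~(\ref{eq:sense_appen1}) and~(\ref{eq:flight_appen}), the positive and negative groups should combine into a single term proportional to $(|J_u'|-|J_u|)$. I expect the stated condition $|J_u'|+|J_u|<N$ to be exactly what forces the coverage-gain term to outweigh the dilution term: it guarantees that enough still-uncovered cells remain so that adding coverage reduces mismatch faster than dilution raises it, whereas once $|J_u'|+|J_u|\ge N$ the supply of fresh cells is exhausted and the monotonic relationship can reverse.

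The main obstacle, in contrast to Theorem~\ref{theorem} where the relationship was exactly linear, is precisely this competition between the two effects: the quadratic objective produces cross terms that do not cancel automatically, and pinning down the threshold $|J_u'|+|J_u|<N$ requires a careful expected-overlap count for random subsets together with a bound showing the dilution contribution stays below the coverage contribution throughout that range. Establishing this inequality rigorously, rather than only to leading order, is where the real work lies; the remaining algebra that extracts the negative constant $\alpha_2$ and matches the form of Eq.~(\ref{eq:rss_theo}) then parallels the manipulations already carried out in Eq.~(\ref{eq:effi_prove}).
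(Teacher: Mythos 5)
Your proposal has two genuine gaps, and one of them is a step that fails outright. You justify the sign classification of your three cell classes by claiming that the budget constraint Eq.~(\ref{eq:sense}) ``keeps the swarm in the under-sensing regime $\Sigma_n \le T_n$'' (with your $\Sigma_n=\sum_{u\in\mathcal{U}}S_{u,n}$). That inference is invalid: Eq.~(\ref{eq:sense}) bounds only the \emph{total} collected values by the \emph{total} target and says nothing per cell; individual cells can be, and in this system are, over-sensed (the paper's running example has drones collecting $372$ at a cell whose target is $360$). Once per-cell over-sensing is possible, the factor $2T_n-\Sigma_n-\Sigma_n'$ can be negative, so newly covered cells need not contribute negatively and diluted cells need not contribute positively: your partition no longer controls the sign of $f_2(J_u')-f_2(J_u)$. (A smaller issue: the partition also presumes nested coverage $J_u\subseteq J_u'$, which requires a coupling argument for randomly re-drawn cell sets.) The paper avoids this trap by never assuming one-sided mismatch: it writes $T_n-\sum_{u}S_{u,n}=\gamma-v_n$ with a constant offset $\gamma$ and a \emph{symmetric} fluctuation $v_n\in\{+c_n,-c_n,0\}$ with $\sum_n v_n\approx 0$, so the cross terms cancel and $f_2(J_u)\approx N\gamma^2+\sum_n c_n^2$ (Eq.~(\ref{eq:rss})), treating over- and under-sensing on an equal footing.

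Second, the step you yourself flag as ``where the real work lies'' --- showing that the coverage gain dominates the dilution cost exactly when $|J_u'|+|J_u|<N$ --- is the entire content of the theorem, and your proposal leaves it as an expectation supported only by scaling heuristics. The paper's mechanism for this step is not a gain-versus-dilution race at all: it models the per-cell mismatch magnitude as $c_n(J_u)=k_n(J_u)\cdot\overline{S}_n(J_u)$ (Eq.~(\ref{eq:dispersion_a})), i.e., the expected number of drones that miss cell $n$, $k_n(J_u)=U\cdot p(J_u)$ with miss probability $p(J_u)=\binom{N-1}{|J_u|}/\binom{N}{|J_u|}=1-|J_u|/N$ (Eqs.~(\ref{eq:expected_a}),~(\ref{eq:binom_prob})), times the average per-drone collection, which by Theorem~\ref{theorem} decreases linearly in $|J_u|$ with slope tied to $\alpha_1$. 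Both factors of $c_n$ are therefore affine decreasing in $|J_u|$, so $\sum_n c_n^2$ behaves like the parabola $x\mapsto \frac{x^2}{N}-x$, and differencing it factors exactly as $f_2(J_u')-f_2(J_u)\leftarrow \bigl[\frac{|J_u'|+|J_u|}{N}-1\bigr]\cdot\alpha_1\cdot\sum_{n}T_n\cdot(|J_u'|-|J_u|)$ (Eq.~(\ref{eq:rss_prove})). The threshold $|J_u'|+|J_u|<N$ is thus the elementary condition for a chord of that parabola (whose minimum is at $x=N/2$) to have negative slope --- an algebraic identity, not the outcome of an overlap-counting bound. To rescue your route you would have to carry out the expected-overlap computation in full and show that it reproduces this same quadratic; until that is done, your attempt is an outline rather than a proof.
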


\begin{proof}

\begin{figure}[!htb]
    \centering
    \includegraphics[scale=0.25]{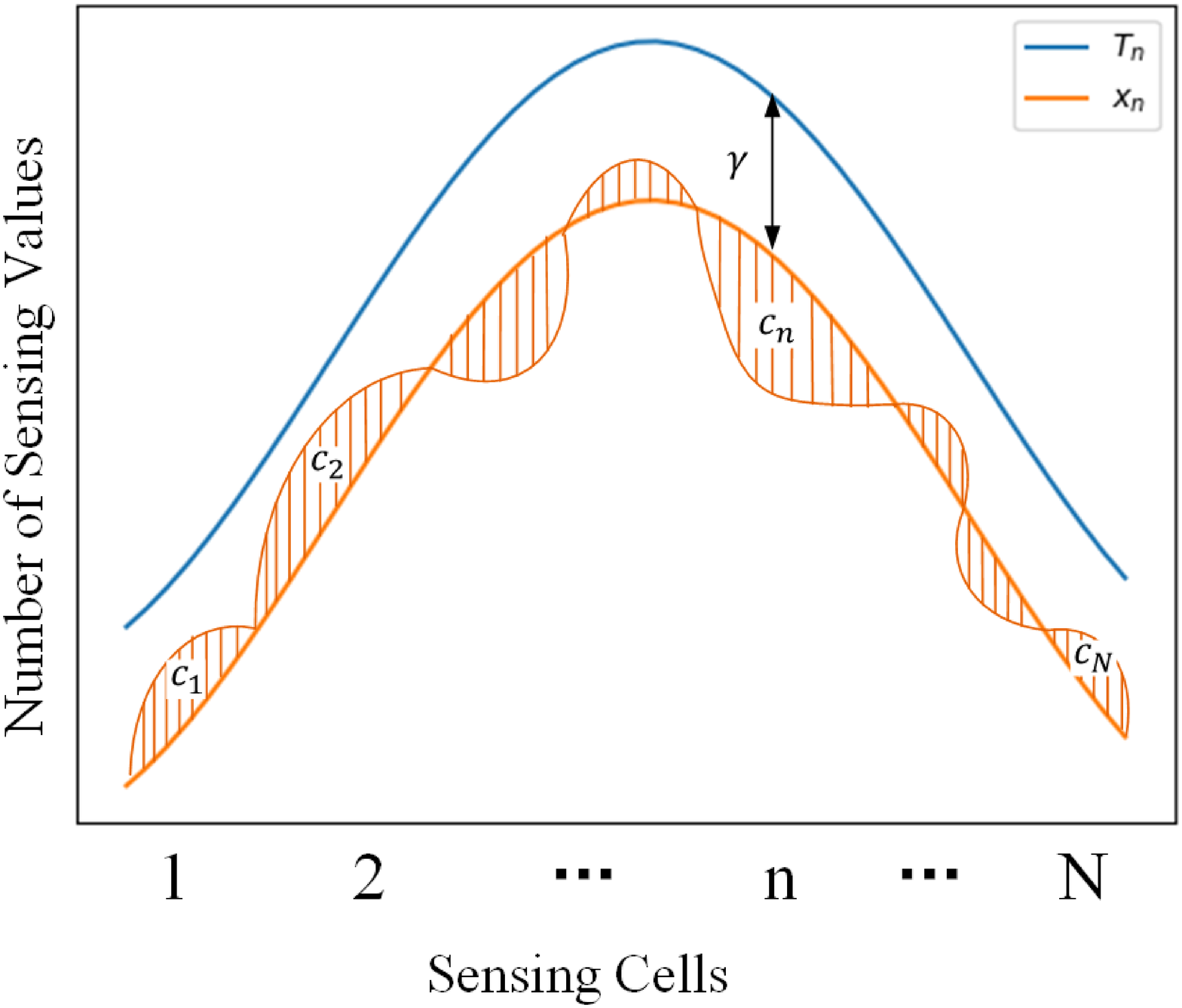}
    \caption{An illustration to assist the proof of Theorem~\ref{theorem2}: The distribution of targets and total sensing values collected by drones.}
    \label{fig:prove}
\end{figure}

    Figure~\ref{fig:prove} assists this proof. Since sensor values over the different cells are collected proportionally to the required target values $T_n$, the collected sensor values are modelled by $x_n + v_n = \sum_{u \in \mathcal{U}} S_{u,n}$, $\forall n \in \{1,...,N\}$, and $x_n\leq T_n$. Each $x_n$ corresponds to the collected sensor values with an optimal matching to the required target values (min RSS), while $v_n \in \{ +c_n, -c_n, 0 \}$ models mismatches such that $\sum_{n \in \mathcal{N}} (x_n + v_n) \approx \sum_{n \in \mathcal{N}} x_n$, and thus $\sum_{n \in \mathcal{N}} v_n \approx 0$. Moreover, the mission inefficiency in the optimal collected sensor data $x_n$ is given by $\gamma$ such that $\gamma \coloneqq  T_n - x_n $. The optimal matching between $T_n$ and $x_n$ denotes that $\gamma \geq 0$ is constant $\forall n \in \{1,...,N\}$. Therefore, it holds that: 

    \begin{equation}
    \begin{split}
            \sum_{n \in \mathcal{N}} (T_n - \sum_{u \in \mathcal{U}} S_{u,n}) &= \sum_{n \in \mathcal{N}} (T_n -(x_n + v_n)) \\
                &\approx \sum_{n \in \mathcal{N}} (T_n - x_n) \\
                &\approx N \cdot \gamma > 0
        \end{split}
        \label{eq:inefficiency-mismatch}
    \end{equation}

    Eq.(\ref{eq:inefficiency-mismatch}) can be squared to calculate the sensing mismatch $f_2(J_u)$ as follows:
    
    \begin{equation}
        \begin{split}
            f_2(J_u) &= \sum_{n \in \mathcal{N}} (T_n - (x_n + v_n))^2 \\
                &= \sum_{n \in \mathcal{N}} (\gamma - v_n)^2 \\
                &= N \cdot \gamma^2 - 2 \gamma \cdot \sum_{n \in \mathcal{N}} v_n + \sum_{n \in \mathcal{N}} v_n^2 \\
                &= N \cdot \gamma^2 + \sum_{n \in \mathcal{N}} c_n^2.
        \end{split}
        \label{eq:rss}
    \end{equation}
    
    \noindent The higher the $c_n$ is, the higher the $f_2(J_u)$.

    The distribution of the mission inefficiency values $T_n - (x_n + v_n)$ among $N$ cells is determined by the selection of the cells by each drone. By assuming that each of the $U$ drones chooses the visiting cells randomly (with replacement), the distribution among $N$ cells can be explained by a Binomial distribution:
    
    \begin{equation}
        P(X=k, J_u) = \binom{U}{k} \cdot p(J_u)^k \cdot (1-p(J_u))^{U-k},
        \label{eq:binom}
    \end{equation}

    \noindent where $p(J_u)$ is the probability that a drone $u$ chooses $|J_u|$ number of cells from the total of $N$ cells that do not contain the cell $n$. This results in mismatch at cell $n$ that either originates from (i) an under-sensing $v_n = -c_n$, if a drone $u$ has a high probability $p(J_u)$ to miss cell $n$ from $J_u$, or (ii) an over-sensing $T_n - (x_n + v_n) = C + c_n$ if this probability is low (see Fig.~\ref{fig:prove}). The probability $p(J_u)$ can be formulated as: 
   
    \begin{equation}
        p(J_u) = \binom{N-1}{|J_u|} / \binom{N}{|J_u|} = 1 - \frac{|J_u|}{N},
        \label{eq:binom_prob}
    \end{equation}
     
    \noindent which expresses that the higher the number of visiting cells is, the lower the probability of drone $u$ to miss a cell $n$. 

    The mismatch $c_n$ at cell $n$ for a drone visiting $J_u$ points can be modeled by a Binomial distribution: 

    \begin{equation}
        c_n(J_u) := k_n (J_u) \cdot \overline{S}_n(J_u),
        \label{eq:dispersion_a}
    \end{equation}
    
    \noindent with the expected values of $k_n(J_u)$ denoting the average number of drones that miss cell $n$ and $\overline{S}_n(J_u)$ denoting the average number of collected values by each drone $u$. The expressions of these values are formulated as follows: 
    
    \begin{align}
        & k_n(J_u) = U \cdot p(J_u), \label{eq:expected_a} \\
        & \overline{S}_n(J_u) = \sum_{u \in \mathcal{U}} S(J_u) \cdot \frac{T_n}{U \cdot \sum_{n \in \mathcal{N}} T_n}, \label{eq:averageS_a} 
    \end{align}
    
    By increasing the number of sensing cells from $|J_u|$ to $|J_u'|$, the influence $f_2(J_u') - f_2(J_u)$ on the sensing mismatch can be formulated using Eq.~(\ref{eq:rss}) as follows: 
    
    \begin{equation}
        \begin{split}
            f_2(J_u') - f_2(J_u) 
                &= \sum_{n \in \mathcal{N}} [\overbrace{c_n(J_u')^2}^{Eq.(\ref{eq:dispersion_a},\ref{eq:expected_a},\ref{eq:averageS_a})} - \overbrace{c_n(J_u)^2}^{Eq.(\ref{eq:dispersion_a},\ref{eq:expected_a},\ref{eq:averageS_a})}] \\
                &= \sum_{n \in \mathcal{N}} \frac{T_n}{\sum_{n \in \mathcal{N}} T_n} \cdot \overbrace{p(J_u')^2}^{Eq.(\ref{eq:binom_prob})} \cdot \overbrace{\sum_{u \in \mathcal{U}} S(J_u')^2}^{Eq.(\ref{eq:effi_prove})} \\ 
                &\quad - \sum_{n \in \mathcal{N}} \frac{T_n}{\sum_{n \in \mathcal{N}} T_n} \cdot \overbrace{p(J_u)^2}^{Eq.(\ref{eq:binom_prob})} \cdot \overbrace{\sum_{u \in \mathcal{U}} S(J_u)^2}^{Eq.(\ref{eq:effi_prove})} \\
                &\leftarrow \sum_{n \in \mathcal{N}} T_n \cdot [\frac{\alpha_1}{N} \cdot |J_u'|^2 - \alpha_1 \cdot |J_u'|] \\
                &\quad - \sum_{n \in \mathcal{N}} T_n \cdot [\frac{\alpha_1}{N} \cdot |J_u|^2 - \alpha_1 \cdot |J_u|] \\
                &\leftarrow [\frac{|J_u'|+|J_u|}{N} - 1] \cdot \alpha_1 \cdot \sum_{n \in \mathcal{N}} T_n \cdot (|J_u'| - |J_u|) \\
                &\leftarrow \alpha_2 \cdot (|J_u'| - |J_u|)
        \end{split}
        \label{eq:rss_prove}
    \end{equation}

    \noindent Thus, it holds that $\alpha_2 = [\frac{|J_u'|+|J_u|}{N} - 1] \cdot \alpha_1 \cdot \sum_{n \in \mathcal{N}} T_n < 0$ if and only if $|J_u'|+|J_u| < N$, where in this case the sensing mismatch is proven to be reverse proportional to the number of random visited cells $J_u$. 
\end{proof}

\section{Results of Experimental Settings}

Figure~\ref{fig:statistics} illustrates the required number of tested maps to achieve a stable mean global cost, measured by the residual of sum squares (RSS).

\begin{figure}[!htb]
    \centering
    \includegraphics[scale=0.42]{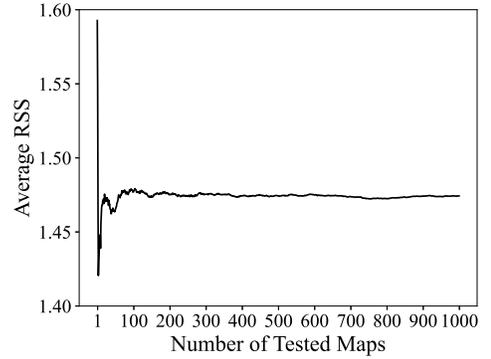}
    \caption{Average residual sum of squares as the number of basic scenarios increases.}
    \label{fig:statistics}
\end{figure}

\end{appendices}

\ifCLASSOPTIONcaptionsoff
  \newpage
\fi




\end{document}